\def\eqref#1{equation~\ref{#1}}
\def\1{\bm{1}}
\DeclareMathAlphabet{\mathsfit}{\encodingdefault}{\sfdefault}{m}{sl}
\SetMathAlphabet{\mathsfit}{bold}{\encodingdefault}{\sfdefault}{bx}{n}
\newtheorem{theorem}{Theorem}
\newtheorem{lemma}{Lemma}
\title{PLGC: Pseudo-Labeled Graph Condensation}
\author{\name Jay Nandy \email jayjaynandy@gmail.com \\
      \addr Ex Fujitsu Research of India
      \AND
      \name Arnab Kumar Mondal \email arnabkumarmondal123@gmail.com\\
      \addr Ex Fujitsu Research of India 
      \AND
      \name Anuj Rathore \email anuj.rathore@fujitsu.com\\
      \addr Fujitsu Research of India 
      \AND
      \name Mahesh Chandran \email mahesh.chandran@fujitsu.com\\
      \addr Fujitsu Research of India 
      }
\begin{document}

\maketitle

\begin{abstract}
Large graph datasets make training graph neural networks (GNNs) computationally costly. Graph condensation methods address this by generating small synthetic graphs that approximate the original data. However, existing approaches rely on clean, supervised labels, which limits their reliability when labels are scarce, noisy, or inconsistent.
We propose Pseudo-Labeled Graph Condensation (PLGC), a self-supervised framework that constructs latent pseudo-labels from node embeddings and optimizes condensed graphs to match the original graph’s structural and feature statistics—without requiring ground-truth labels.
PLGC offers three key contributions: (1) A diagnosis of why supervised condensation fails under label noise and distribution shift. (2) A label-free condensation method that jointly learns latent prototypes and node assignments. (3) Theoretical guarantees showing that pseudo-labels preserve latent structural statistics of the original graph and ensure accurate embedding alignment.
Empirically, across node classification and link prediction tasks, PLGC achieves competitive performance with state-of-the-art supervised condensation methods on clean datasets and exhibits substantial robustness under label noise, often outperforming all baselines by a significant margin. 
Our findings highlight the practical and theoretical advantages of self-supervised graph condensation in noisy or weakly-labeled environments.
\end{abstract}

\maketitle
\section{Introduction}  \label{sec_intro}

Large-scale graph-structured data arise in numerous domains, including social networks \cite{social_www_2019}, biological systems \cite{bio_bio_2021}, and knowledge graphs \cite{kd_neurips_2020}. The growing scale and complexity of these graphs, often comprising millions of nodes with high-dimensional attributes and heterogeneous connectivity patterns, pose significant computational challenges for training graph neural networks (GNNs). These challenges are exacerbated in resource-constrained environments and in settings requiring repeated training, such as hyperparameter search, architecture exploration, or continual adaptation to dynamic graph streams.

Graph condensation has emerged as an effective strategy to mitigate these limitations by synthesizing a compact graph whose structural and feature distributions approximate those of the original data \cite{gcond,sfgc,geom,sgdd,sun2024gc}. Models trained on the condensed graph attain accuracy close to those trained on the full graph while reducing training and storage costs substantially. Despite these advantages, existing graph condensation methods predominantly depend on high-quality supervised labels to guide the construction of the synthetic graph. This reliance significantly restricts their applicability. In many real-world scenarios---financial transaction networks \cite{sheng2011impact}, dynamic social interactions \cite{wang2015localizing}, and sensor or IoT graphs \cite{yick2008wireless,dai2021nrgnn,ding2024divide}---labels are scarce, incomplete, noisy, or subject to temporal distribution shift. Such conditions undermine the stability and effectiveness of label-dependent condensation objectives.

\begin{figure}
  \centering
  \includegraphics[keepaspectratio, width=0.8\textwidth]{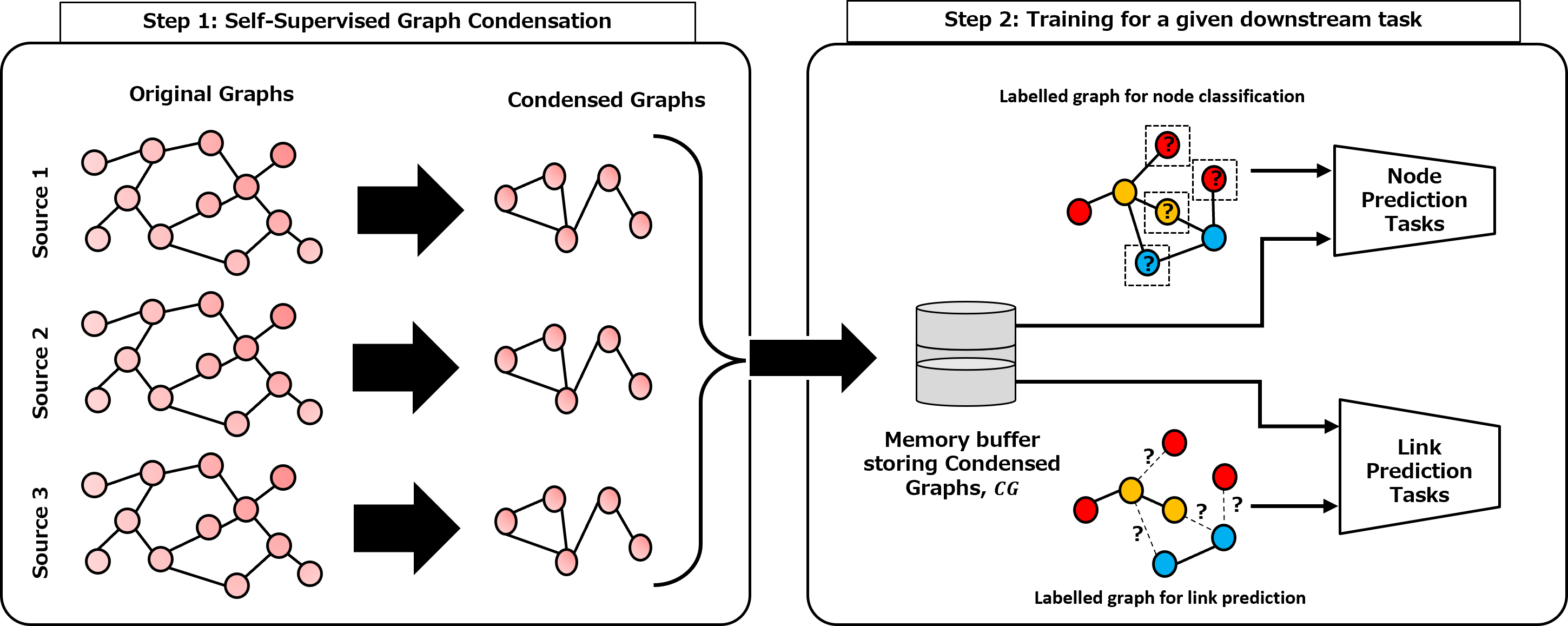}
  \caption{\small Proposed self-supervised graph condensation framework to condense the original (unlabeled) graphs from different distributed sources within a small memory that can be used for training or finetuning for downstream tasks with limited supervision.}
  \label{fig_ssgc}
\end{figure}

To address these limitations, we propose \textbf{Pseudo-Labeled Graph Condensation (PLGC)}, a self-supervised condensation framework that circumvents the need for ground-truth labels by constructing and optimizing \emph{latent pseudo-labels} that capture the intrinsic structural and feature characteristics of the original graph. PLGC operates through an alternating optimization procedure consisting of two components:  
\begin{enumerate}[leftmargin=*]
\item \textbf{Pseudo-Label Construction}, in which latent soft labels are inferred by encoding statistical relationships in node features and neighborhood structures; and  
\item \textbf{Condensed Graph Optimization}, in which the synthetic graph is updated by minimizing the divergence between its embedding distribution and the pseudo-label-induced representation of the original graph.
\end{enumerate}
This alternating mechanism ensures that the synthetic graph remains aligned with the latent geometry of the original graph, regardless of supervised information. Furthermore, PLGC extends naturally to \textbf{multi-source condensation}, where multiple independent graphs are available. In such cases, PLGC learns source-specific condensed graphs while jointly aligning their latent representations through pseudo-labels, thereby enabling the training of a unified downstream model in a fully or partially unsupervised manner (Figure~\ref{fig_ssgc}).

A key contribution of this work is a \textbf{theoretical characterization of pseudo-label stability and condensation error propagation}. We establish (i) conditions under which pseudo-label estimation remains stable in the presence of noise or distributional variation, and (ii) upper bounds on the embedding divergence between the original and condensed graphs induced by PLGC’s optimization dynamics. These results formally justify the robustness of PLGC in scenarios where supervised condensation methods deteriorate.

Our primary contributions are as follows:
\begin{enumerate}[leftmargin=*]
    \item We identify structural limitations of supervised graph condensation, demonstrating analytically and empirically that their effectiveness deteriorates under label noise, label scarcity, and label distribution shift.
    
    \item We introduce \textbf{PLGC}, a self-supervised condensation framework based on alternating pseudo-label estimation and condensed-graph optimization, completely eliminating the need for ground-truth labels.
    
    \item We provide \textbf{theoretical guarantees}, including stability results for pseudo-label inference and error bounds for condensed--original embedding alignment, establishing principled robustness under weak or unreliable supervision.
    
    \item We conduct extensive experiments on five benchmark datasets, showing that  
    (i) PLGC significantly outperforms supervised condensation methods in low-label, noisy-label, and shift-label regimes;  
    (ii) it remains competitive with supervised methods when clean labels are available; and  
    (iii) it offers superior generalization in multi-source and distributionally heterogeneous settings.
\end{enumerate}

\section{Background \& Related Work}\label{sec_related_work}

\subsection{Problem Formulation}
\label{sec_problem_statement}

We consider a setting with $M$ independent sources, each providing an unlabeled graph 
$\{\mathcal{T}^{i}\}_{i=1}^{M}$ defined over a shared \emph{node-level} label space.  
Each graph $\mathcal{T}^{i} = (X^{i}, A^{i})$ contains $N^{i}$ nodes with node features 
$X^{i} \in \mathbb{R}^{N^{i} \times d}$ and adjacency matrix $A^{i} \in \{0,1\}^{N^{i} \times N^{i}}$, but no ground-truth labels.  
The objective is to construct, for each source graph, a significantly smaller \emph{condensed graph} 
paired with a set of \emph{pseudo-labels},
$\{(\mathcal{S}^{i}, \widetilde{Y}^{i})\}_{i=1}^{M}$,
such that the condensed graphs retain the essential structural and feature-level information required for downstream predictive tasks.

Formally, each condensed graph $\mathcal{S}^{i} = (\widetilde{X}^{i}, \widetilde{A}^{i})$ contains 
$N'^{\,i}$ synthetic nodes, where $N'^{\,i} \ll N^{i}$, and is associated with pseudo-labels 
$\widetilde{Y}^{i} = \{\widetilde{y}^{i}_{k}\}_{k=1}^{N'^{\,i}}$ that summarize the latent cluster-level 
statistics of the original graph.
Our goal is to learn a representation model $\mathrm{GNN}_{\theta}$ such that:

\begin{enumerate}[leftmargin=*]
    \item \textbf{Condensed graph alignment.}  
    The condensed graphs can be used to train $\mathrm{GNN}_{\theta}$ in a self-supervised manner by 
    aligning node embeddings with the pseudo-labels:
    \[
    \mathrm{GNN}_{\theta}(\mathcal{S}^{i}) \rightarrow \widetilde{Y}^{i}, 
    \quad \forall\, i \in \{1,\dots,M\}.
    \]

    \item \textbf{Downstream transferability.}  
    Once $\mathrm{GNN}_{\theta}$ is learned from synthetic graphs, it can be transferred to a new 
    downstream task by fine-tuning only a lightweight prediction head $f_{\phi}$ using limited supervision:
    \[
    f_{\phi}\big(\mathrm{GNN}_{\theta}(\mathcal{G}^{\text{test}})\big) \rightarrow 
    Y^{\text{test}},
    \]
    where $\mathcal{G}^{\text{test}}$ is a test graph with available labels.
\end{enumerate}

Thus, the overall goal is to design a self-supervised graph condensation framework that produces 
compact synthetic graphs and pseudo-labels that preserve task-relevant latent structure, enabling 
efficient and minimally supervised training of downstream graph models.

\begin{table*}
\centering
\resizebox{0.9\textwidth}{!}{
\begin{tabular}{l|lccccc}
\hline
& \multicolumn{1}{c}{\textbf{Objective}} 
& \textbf{\begin{tabular}[c]{@{}c@{}}Condensed\\ Memory\end{tabular}}
& \textbf{\begin{tabular}[c]{@{}c@{}}Self\\supervision\end{tabular}} 
& \textbf{\begin{tabular}[c]{@{}c@{}}Multiple \\ Sources\end{tabular}}
& \textbf{\begin{tabular}[c]{@{}c@{}}Downstream \\ Performance\end{tabular}} 
& \textbf{\begin{tabular}[c]{@{}c@{}}Robustness under \\ label-noise\end{tabular}}
\\ \hline
\textbf{\begin{tabular}[c]{@{}l@{}}Supervised \\ Condensation\\ (Existing)\end{tabular}} & \begin{tabular}[c]{@{}l@{}}Produce a synthetic graph \\ by preserving class-specific \\ characteristics.\end{tabular} 
& Yes & No & Yes & Good & Poor\\ \hline
\textbf{\begin{tabular}[c]{@{}l@{}}Coarsening \& \\ Sparsification\end{tabular}} & \begin{tabular}[c]{@{}l@{}}Returns a small graph by \\preserving certain graph \\properties\end{tabular} 
& Yes & Yes & Yes & Poor & Poor\\ \hline
\textbf{Graph SSL} & \begin{tabular}[c]{@{}l@{}}Learns a generalized \\ representation model for \\ various downstream tasks.\end{tabular} 
& No & Yes & No & Good & Good \\ \hline
\textbf{\begin{tabular}[c]{@{}l@{}}PLGC (Ours)\end{tabular}} & \begin{tabular}[c]{@{}l@{}}Returns a small synthetic \\ graph by preserving latent \\ statistics.\end{tabular} 
& Yes & Yes & Yes & Good & Good
\\ \hline
\end{tabular}
}
\caption{\small Advantages of our pseudo-labeled-based self-supervised graph condensation over existing techniques.}
\label{table_advantagePLGC}
\vspace{-1em}
\end{table*}

\textit{\textbf{Graph reduction techniques}}
aim to reduce the size of a given graph while preserving essential information, and can be broadly categorized into graph sparsification, coarsening, sketching, and graph condensation. 
Graph sparsification selects a subset of nodes or edges from the original graph to reduce complexity \citep{chen2021unified,razin2023ability}. 
Graph coarsening constructs a smaller graph by learning a surjective mapping from the original graph, typically by merging multiple nodes into supernodes \citep{si2022serving,kumar2023featured}. 
Graph sketching summarizes the original graph into a compact representation by preserving selected structural properties (\textit{e.g.}, centrality or spectral statistics) through node or edge sampling \citep{ding2022sketch}.

Although these approaches are often unsupervised or require minimal supervision, they face notable limitations in practical learning scenarios. For instance, graph sparsification becomes less effective when nodes carry rich attributes, and coarsening or sketching methods typically preserve task-agnostic properties (\textit{e.g.}, eigenvalues or degree distributions) that may not align with downstream objectives. As a result, their performance often lags behind graph condensation methods when used for training graph neural networks.

\textbf{\textit{Graph condensation (supervised)}} methods aim to distill large graphs into small synthetic datasets that preserve latent structural and semantic information, such that models trained on condensed graphs and perform comparably to those trained on the full data \citep{gcond,doscond,gcdm,sfgc,geom,cat,puma,lei2023comprehensive,yu2023dataset}. 
Early work focused on gradient matching between real and synthetic data \citep{wang2018dataset,nguyen2020dataset,zhao2020dataset}, initially in continuous domains (e.g., images) and later extended to graphs.

GCond \citep{gcond} adapts online gradient matching to graph-structured data by jointly learning node features and adjacency matrices. DosCond \citep{doscond} improves efficiency via single-step gradient matching using probabilistic graph models for graph classification. Subsequent approaches further reduce computation by implicitly encoding topology into node features, often using identity or fixed adjacency structures.
Representation matching methods, such as CaT \citep{cat}, PUMA \citep{puma}, SERGCL \citep{sergcl} generate condensed graphs by aligning class-wise representations between real and synthetic graphs. Trajectory-based approaches, including SFGC \citep{sfgc} and GEOM \cite{geom}, leverage expert training trajectories from original graphs to guide condensation. While effective, these trajectory-based methods incur substantial computational and memory overhead due to the storage and replay of their expert trajectories.
ST-GCOND \citep{stgcond} improves efficiency through spectral approximations but remains fundamentally supervised, limiting its robustness in noisy or low-label settings.

A fundamental limitation of most of these existing graph condensation methods is their reliance on labeled data. In practice, node or graph labels may be noisy, incomplete, or unavailable, which significantly degrades performance. Even automatically generated labels can be corrupted by measurement errors (e.g., faulty sensors in weather data \citep{weather_sensor}) or data /distribution shifts in evolving networks \citep{chauhan2022multi,graph_distShift}.

\textit{\textbf{Graph Condensation \& Self-Supervised Learning (SSL).}}
Graph self-supervised learning (GSSL) enables representation learning without labeled supervision and has demonstrated strong generalization across downstream tasks \cite{kim2021how,GraphCL,tgcl}. SSL methods generally fall into two categories: 
(1) predictive approaches \cite{Hu2020Strategies,kim2021how,rong2020self}, which construct pretext tasks based on structural signals and emphasize local semantics, and 
(2) contrastive learning approaches \cite{GraphCL,JOAO,yin2022autogcl}, which maximize agreement between augmented views to capture global semantics. 
However, SSL methods typically output representation models rather than condensed datasets, and they do not directly address multi-source knowledge aggregation or data compression.

Recent work has begun exploring self-supervised formulations for graph condensation, although important gaps remain. SGDC \citep{wang2024self} performs condensation for graph-level datasets by matching representations from a fixed self-supervised encoder, but it does not infer latent cluster structure, lacks node-level pseudo-labeling, and does not support multi-source alignment. CTGC \citep{CTGC} incorporates contrastive objectives to disentangle semantic and structural signals, yet it does not construct explicit prototype-based pseudo-labels nor provide theoretical guarantees on cluster preservation or assignment stability. 

In contrast, our PLGC introduces a principled pseudo-labeling mechanism that jointly learns latent prototypes and node–prototype assignment matrices in a fully self-supervised manner. This design enables stable, label-free representation matching with theoretical guarantees on cluster separation and centroid concentration, and yields strong robustness under label noise. Moreover, PLGC naturally extends to multi-source scenarios by aligning source-specific condensed graphs within a shared latent space—capabilities not addressed by prior methods. Overall, PLGC provides a unified, theoretically grounded, and practically robust framework that overcomes key limitations of existing supervised and self-supervised condensation approaches.

Table~\ref{table_advantagePLGC} summarizes the strengths and limitations of existing methods. While graph condensation has broad applications—including architecture search, privacy preservation, adversarial robustness, and continual learning—prior approaches either rely heavily on labels or lack reusable condensed representations. PLGC bridges this gap by combining the generalization strength of SSL with the efficiency of condensation, enabling reusable condensed graphs and effective knowledge aggregation from noisy or unlabeled multi-source data.

\section{Proposed Method}\label{sec:proposed_method}
Existing graph condensation methods are typically supervised, that rely on ground-truth labels to guide either gradient matching or representation matching during the construction of a synthetic graph 
\cite{gcond,cat}.  
Their objective can be written as
\begin{align}
\label{eq_sup_condensed}
    \min_{\mathcal{S}}~ 
    \mathcal{L}_{node}(\mathrm{GNN}_{\theta_{\mathcal{S}}}(A,X), Y)
    \quad\text{s.t.}\quad
    \theta_{\mathcal{S}} 
    = \arg\min_{\theta}~\mathcal{L}_{node}(\mathrm{GNN}_{\theta}(A', X'), Y'),
\end{align}
where $\mathcal{T}=(A,X,Y)$ denotes the original labeled graph and 
$\mathcal{S}=(A',X',Y')$ is the condensed graph with $N'\ll N$ nodes.  
Here, $A\in\mathbb{R}^{N\times N}$ and $A'\in\mathbb{R}^{N'\times N'}$ denote adjacency matrices;  
$X\in\mathbb{R}^{N\times d}$ and $X'\in\mathbb{R}^{N'\times d}$ denote node features;  
$\mathrm{GNN}_{\theta}$ is a GNN parameterized by $\theta$; and 
$\mathcal{L}_{node}$ is a supervised node-classification loss (e.g., cross-entropy).

The goal in Eq.~\eqref{eq_sup_condensed} is to learn a synthetic graph $\mathcal{S}$ such that 
the model trained on $\mathcal{S}$ generalizes to the original graph $\mathcal{T}$.  
However, the dependence on clean labels $Y$ renders these methods fragile in noisy-label 
scenarios \citep{chauhan2022multi}.  
In contrast, designing a \emph{self-supervised} condensation objective requires eliminating all dependencies on $Y$, still generating a condensed graph that preserves task-relevant structure.

We address this challenge by learning (i) a set of latent \emph{pseudo-labels} $\tilde{Y}$ and 
(ii) an associated \emph{assignment matrix} $Q_{\mathcal{T}}$, enabling self-supervised 
condensation without ground-truth labels.

\subsection{Pseudo-Labeled Graph Condensation}
\label{sec_plgc}

Our method incorporates two key iterative steps for our self-supervised graph condensation:

\begin{enumerate}[leftmargin=*]
    \item \textbf{Updating $K$ pseudo-labels $\tilde{Y}\in\mathbb{R}^{K\times d}$},  
    where $K\ll N$, such that each pseudo-label captures representative statistics of one latent 
    cluster in the embedding space.

    \item \textbf{Updating assignment matrix 
    $Q_{\mathcal{T}}\in\{0,1\}^{N\times K}$},  
    where each row is a one-hot vector linking each node to exactly one pseudo-label.
\end{enumerate}

The pseudo-label matrix $\tilde{Y}=[\tilde{y}_1,\dots,\tilde{y}_K]^\top$ represents $K$ distinct pseudo-labels, each capturing a cluster-level embedding in the original graph.  
Given $Q_{\mathcal{T}}$, the induced pseudo-label for each node is $Q_{\mathcal{T}}\tilde{Y}$.

\begin{figure}
  \centering
  \includegraphics[width=0.85\textwidth, keepaspectratio]{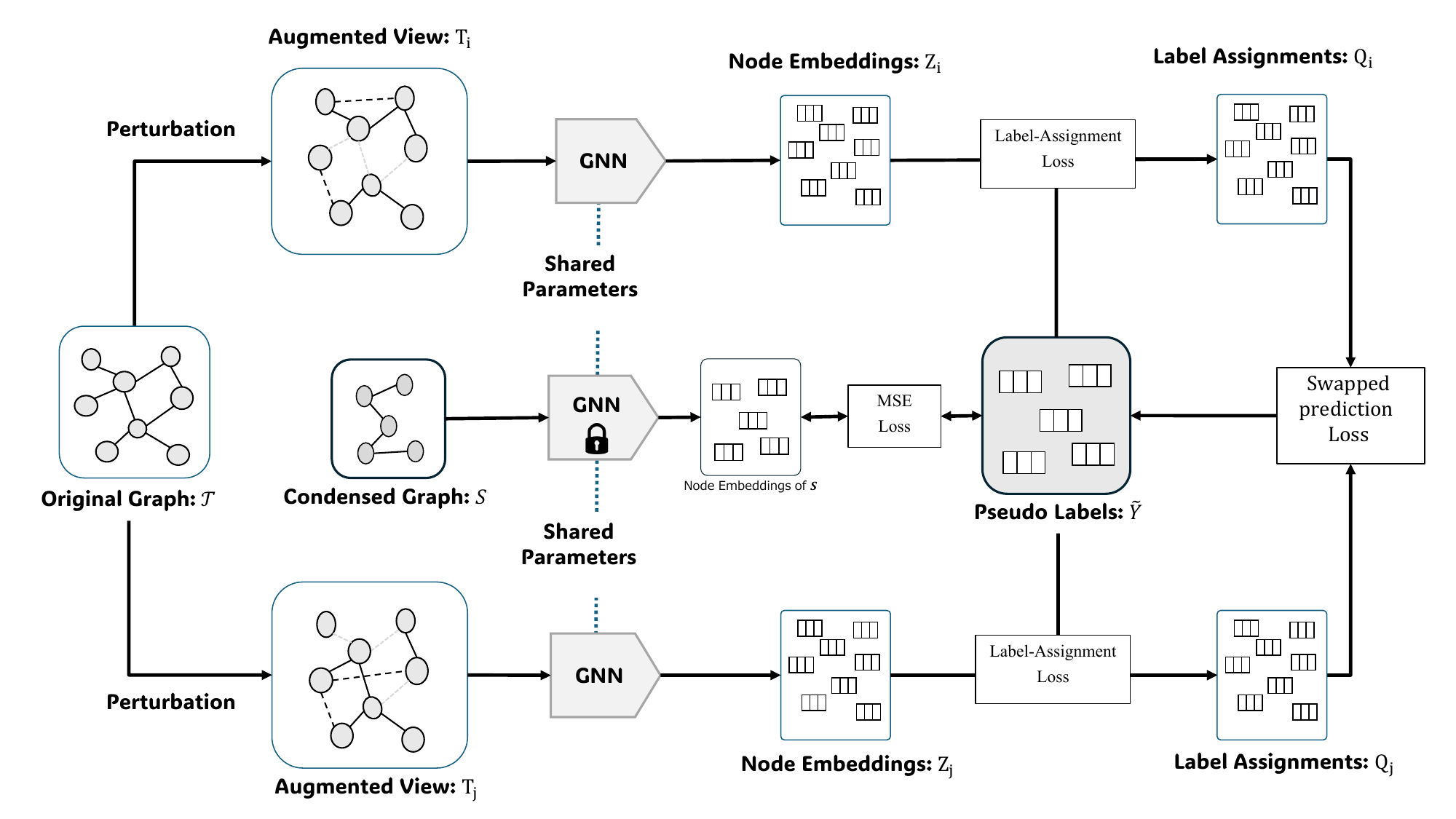} 
\caption{\small 
Overview of the proposed PLGC framework. The method alternates between two coupled optimization steps: 
\textbf{(I) Pseudo-Label Learning} — multiple graph augmentations are generated from the original graph, processed through a shared GNN encoder to obtain node embeddings, and assigned to the pseudo-labels using an entropy-regularized Sinkhorn optimization. The resulting assignments and pseudo-labels are updated via a swapped-assignment view prediction loss. 
\textbf{(II) Condensed Graph Optimization} — the condensed graph is passed through the same shared encoder, and its node embeddings are aligned with the learned pseudo-labels using an MSE-based representation matching loss. 
Together, these steps iteratively refine pseudo-labels and the condensed graph so that the synthetic graph preserves the latent structure of the original graph without relying on ground-truth labels.}
  \label{fig_plgc}
\end{figure}

\paragraph{Self-supervised condensation objective.}  
We formulate the PLGC objective as an unsupervised analogue of Eq.~\eqref{eq_sup_condensed},  
adding two auxiliary optimization problems to learn $Q_{\mathcal{T}}$ and $\tilde{Y}$:
\begin{align}
\label{eq_unsup_condensed}
\min_{\mathcal{S}}~
    \mathcal{L}\big(\mathrm{GNN}_{\theta_{\mathcal{S}}}(A,X), Q_{\mathcal{T}}\tilde{Y}\big)
    \quad\text{s.t.}\quad
    \theta_{\mathcal{S}} 
    = \arg\min_{\theta}~
        \mathcal{L}\big(\mathrm{GNN}_{\theta}(A',X'), Q_{\mathcal{S}}\tilde{Y}\big),
\end{align}
where $\mathcal{L}$ measures the discrepancy between node embeddings and pseudo-labels.
\begin{align}
\label{eq_unsup_second}
    Q_{\mathcal{T}},\tilde{Y}
    = \arg\min_{Q_{\mathcal{T}},\tilde{Y}}
        \mathcal{L}_{pseudo}\big( \mathrm{GNN}_{\theta'}(A,X), Q_{\mathcal{T}}\tilde{Y} \big).
\end{align}
where $\mathcal{L}_{pseudo}$ is a self-supervised loss used to learn latent clusters.  

Because every pseudo-label must correspond to at least one node in $\mathcal{S}$, we set $N'=K$ and fix the condensed-graph assignment matrix as
$Q_{\mathcal{S}} = I_{K}$, ensuring a one-to-one mapping between nodes of $\mathcal{S}$ and pseudo-labels.
Figure~\ref{fig_plgc} illustrates the PLGC pipeline.

\subsubsection{Learning Pseudo-Labels and Assignment Matrix}
\label{sec_pseudo_label}

We construct $\mathcal{L}_{pseudo}$ using two iterative components:
(a) a \emph{swapped-assignment view prediction loss} $\ell_{swap}$, used to update $\tilde{Y}$ and the encoder $\mathrm{GNN}_{\theta'}$, and
(b) An \emph{assignment loss} $\ell_{assign}$, used to compute $Q_{\mathcal{T}}$ by enforcing balanced assignment of nodes to the pseudo-labels.
We apply two stochastic augmentations $T_i$ and $T_j$ to the original graph and extract 
$\ell_2$-normalized embeddings:
$\mathcal{Z}_i=\frac{\mathrm{GNN}_{\theta'}(T_i)}{\|\mathrm{GNN}_{\theta'}(T_i)\|_2}$ and 
$\mathcal{Z}_j=\frac{\mathrm{GNN}_{\theta'}(T_j)}{\|\mathrm{GNN}_{\theta'}(T_j)\|_2}$.

\paragraph{Swapped-assignment view prediction loss}
contrasts between the embeddings of different views by comparing their pseudo-label assignments instead of their features, allowing to update the pseudo-labels in an online fashion \citep{swav}.
Given node embedding $z_{i,n}$ from augmentation $T_i$ and pseudo-label 
$q_{j,n}$ computed from $T_j$, the loss encourages consistent cluster assignments across 
augmentations:
\begin{align}
    \ell_{swap}(z_{i,n}, q_{j,n})
    = -\sum_{k} q_{j,n}^{(k)}
        \log
        \frac{
            \exp(z_{i,n}^\top \tilde{y}_k)
        }{
            \sum_{k'} \exp(z_{i,n}^\top \tilde{y}_{k'})
        }.
\end{align}

\paragraph{Balanced assignment loss} ensures that the node-embeddings are equally distributed among the pseudo-labels in a batch are distinct, therefore preventing the trivial solution of producing same assignments for each node.
Following \cite{asano2019self,swav}, we compute $Q_i$ for augmentation $T_i$ by solving
\begin{align}
\label{eq_assignment}
\begin{aligned}
    \ell_{assign}
    &= \mathrm{Tr}(Q_i^\top \tilde{Y}^\top \mathcal{Z}_i)
    + \epsilon\,\mathcal{H}(Q_i), 
    \quad\text{s.t.}~~ 
    &Q_i \in \Big\{
        Q_i\ge0:
        Q_i^\top \mathbf{1}_B = \tfrac{1}{K}\mathbf{1},
        \quad
        Q_i\mathbf{1}_K = \tfrac{1}{B}\mathbf{1}
        \Big\},
\end{aligned}
\end{align}
where $B$ is the batch size and $\mathcal{H}$ is entropy.  
The solution is given by
$Q_i^* 
    = \mathrm{diag}(u)
        \exp\!\big(\tilde{Y}\mathcal{Z}_i/\epsilon\big)
        \mathrm{diag}(v)$,
with $u,v$ obtained via Sinkhorn-Knopp normalization \cite{cuturi2013sinkhorn}.  
Finally, each row is discretized to the closest one-hot vector.
The full pseudo-label learning loss is as follows:
\begin{align}
\label{eq_pseudo_label_learner}
\begin{aligned}
    \mathcal{L}_{pseudo}
    = \sum_{i,j}\sum_{n}
        \ell_{swap}(z_{i,n}, q_{j,n})
        \quad
        \text{s.t.}\quad
        Q_i = \arg\max_Q \ell_{assign}.
\end{aligned}
\end{align}

\subsubsection{Generating Condensed Graphs}
\label{sec_ssl_graphgen}

After learning $(Q_{\mathcal{T}},\tilde{Y})$ and encoder $\mathrm{GNN}_{\theta'}$ using 
$\mathcal{L}_{pseudo}$, we construct $\mathcal{S}$ by minimizing a representation-matching objective.

While prior condensation methods match gradients \cite{gcond} or class-conditional distributions 
\cite{cat}, we adopt latent representation matching via 
\emph{maximum mean discrepancy (MMD)} \cite{zhao2023DM}.  
The supervised version aligns conditional means 
$\mathbb{E}[Z_{\mathcal{T}\mid y}]$ and 
$\mathbb{E}[Z_{\mathcal{S}\mid y}]$ for each class $y$.

For PLGC, we replace class labels with pseudo-labels and use 
$\mathrm{GNN}_{\theta'}$ as the embedding model.  
Because $Q_{\mathcal{S}}=I_K$ and $N'=K$, each pseudo-label corresponds to a unique synthetic node.  
Thus, the MMD objective reduces to a mean-square loss:
\begin{align}
\min_{\mathcal{S}}
    \sum_{\tilde{y}\in\tilde{Y}}
        \big\|
            \tilde{y} - z_{\mathcal{S}\mid \tilde{y}}
        \big\|^2,
\end{align}
where $z_{\mathcal{S}\mid\tilde{y}}$ denotes the embedding of the synthetic node mapped to $\tilde{y}$.  
Consistent with findings in \cite{cat,sfgc,geom}, we omit adjacency learning and condense only the node features, benefiting both performance and storage efficiency.

\subsection{Backbone Reconstruction and Fine-Tuning}

Given $M$ sources, PLGC produces $M$ condensed graphs with associated pseudo-labels: $CG = \{(\mathcal{S}_i, \tilde{Y}_i)\}_{i=1}^M$.

\paragraph{(a) Backbone reconstruction.}
We train a backbone GNN by aligning embeddings of condensed graphs with their pseudo-labels:
    $\min_{\theta}
    \sum_{(\mathcal{S}_i,\tilde{Y}_i)\in CG}
        \sum_{\tilde{y}\in\tilde{Y}_i}
            \big\|
                \tilde{y} - z_{\mathcal{S}_i\mid\tilde{y}}
            \big\|^2$.

\paragraph{(b) Task-specific fine-tuning.}
For downstream applications, we fine-tune a prediction head $f_\phi$ on 
task-specific graph $\mathcal{G}^{test}$ using clean labels $Y^{test}$:
$\min_{\phi}
    \mathcal{L}_{downstream}\big(
        f_{\phi}\circ \mathrm{GNN}_{\theta}(A^{test},X^{test}), 
        Y^{test}
    \big)$.

Therefore, PLGC requires no labels during condensation and uses only minimal supervision during downstream fine-tuning.


\section{Theoretical Motivation and Statement}\label{sec:theory}

For graph condensation to remain effective across downstream tasks and varying noise levels, the synthetic graph must faithfully preserve the statistical geometry of the original graph. Since PLGC operates without access to any domain specific ground-truth class labels, the quality of the learned pseudo-labels is critical -- we should ensure the pseudo-labels remain aligned with the underlying latent structure to obtain informative, stable, and robust condensed graphs.
Our theoretical analysis formalizes this intuition under with the following mild and widely adopted assumptions in centroid-based clustering theory:
\begin{itemize}[leftmargin=*]
    \item[A1] \textbf{Sub-Gaussian Latent Structure:} A spherical sub-Gaussian latent structure, where the node embeddings within each latent cluster exhibit Gaussian-type tail decay (concentration) without requiring a specific distributional form.
    This assumption holds for Gaussian mixtures, bounded-support distributions, and more generally for any sub-Gaussian cluster distribution, which is standard in high-dimensional clustering analysis.
    
    \item[A2] \textbf{Separability:} Underlying true latent centers, $\{\mu_k\}$ are well-separated by a non-trivial minimum distance: $\Delta = \min_{j \ne k} \|\mu_j - \mu_k\|$.
\end{itemize}

Under these conditions, we show that the pseudo-labels 
$\widetilde{Y}=\{\widetilde{y}_1,\dots,\widetilde{y}_K\}$ generated by PLGC reliably capture the latent structure of the original graph.  
In essence, we show that the pseudo-labels remain close to the true latent centers and preserve the separation between pseudo-labels.
More formally,

\paragraph{Theoretical statement and assumptions.}
Consider node embeddings  $\{z_i\}_{i=1}^n$ sampled independently corresponding to $K$ latent centers, with pseudo-label assignments denoted by $q_{ik} \in \{0,1\}$.
We set the deviation parameter as: $\epsilon_k \;:=\; 4\sigma\sqrt{\frac{d+\log(2K/\delta)}{s_k}}$ where $s_k := \sum_i q_{ik}$ denotes the effective sample size of cluster $k$.
Assuming (A1) and (A2) and given the pseudo-labels satisfying the stationarity condition (i.e., each pseudo-label aligns with its assignment-weighted feature mean), 
the following statements hold simultaneously for all \(k\) with probability at
least \(1-\delta\):

\begin{enumerate}[label=(\roman*)]
  \item \textbf{Pseudo-label Concentration.} 
  Each pseudo-label satisfies $\|\widetilde{y}_k - \mu_k\| \le \epsilon_k$, i.e., pseudo-labels produced by PLGC remains tightly concentrated to the true population centers.

  \item \textbf{Interior-point recovery.} Any point $z_i$ assigned to $k$-th pseudo-label (i.e., $q_{ik}=1$) such that $\|z_i - \mu_k\| < \frac{\Delta}{2} - \epsilon_k$ remains closer to $\widetilde{y}_k$ than other labels 
  $\widetilde{y}_\ell$ -- indicating that, the interior points are always correctly assigned.

  \item \textbf{Sample Complexity \& Pseudo-Label Separation.}
  A sufficient sample size to ensure a positive interior margin of $\epsilon_{\max}\le\Delta/\beta$ is: $s_k \;\ge\; \frac{16\,\sigma^2\,\beta^2}{\Delta^2}\,\big(d + \log(2K/\delta)\big)$, and leads to a stronger separation of pseudo-labels: $\|\widetilde y_k-\widetilde y_\ell\| \ge (1-\tfrac{2}{\beta})\Delta ~~ \forall k\neq\ell$
\end{enumerate}

The complete proofs and additional supporting lemmas are provided in the appendix.
These results guarantee that pseudo-labels are well separated, accurately concentrated, and robustly recover interior nodes—ensuring that PLGC preserves the underlying latent structure of the full graph.

\section{Experiments}\label{sec_exp}
\begin{wraptable}{r}{0.65\textwidth}
\centering
\resizebox{0.6\textwidth}{!}{
\begin{tabular}{l|ccccc}
\hline
Datasets & \#Nodes & \#Edges & \#Class & \#Feature & Train/Val/Test \\ \hline
Citeseer & 3,327 & 4,732 & 6 & 3,703 & 120/500/1000 \\
Cora & 2,708 & 5,429 & 7 & 1,433 & 140/500/1000 \\
Ogbn-arxiv & 169,343 & 1,166,243 & 40 & 128 & 90,941/29,799/48,603 \\ \hline
Flickr & 89,250 & 899,756 & 7 & 500 & 44,625/22,312/22,313 \\
Reddit & 232,965 & 57,307,946 & 41 & 602 & 15,3932/23,699/55,334 \\ \hline
\end{tabular}
}
\caption{Statistics of the datasets.}\label{table_datasets}
\end{wraptable}

\paragraph{Datasets, Tasks, and Evaluation Protocol.}
We evaluate PLGC on node classification and link prediction tasks under both transductive and inductive settings. 
We use five benchmark datasets: Cora, Citeseer \cite{cora}, and Ogbn-Arxiv \cite{ogbn} for transductive evaluation, and Flickr \cite{flickr} and Reddit \cite{reddit} for inductive evaluation (see Table~\ref{table_datasets}). 
All experiments follow standard data splits and evaluation protocols for each dataset. 
Results are averaged over 10 runs, and we report mean performance with standard deviation.

\paragraph{Label Noise and Multi-Source Settings.}
To study robustness, we introduce node-level label noise during the condensation stage by randomly corrupting a fraction of node labels.
We consider noise levels $\{0.0, 0.3, 0.5, 0.7, 0.9\}$, where $0.0$ denotes fully clean labels.
For multi-source experiments, each dataset is partitioned into three disjoint subgraphs.
In transductive settings, training, validation, and test nodes are distributed across sources, whereas in inductive settings only the training graph is partitioned.

\begin{table*}[t]
\centering
\resizebox{\textwidth}{!}{
\begin{tabular}{l|c|ccccc|cccc|c}
\hline

 & r & \multicolumn{5}{c|}{\textbf{Supervised}} & \multicolumn{4}{c|}{\textbf{Self-Supervised}} & \begin{tabular}[c]{@{}l@{}}Whole Dataset\end{tabular} \\ \cline{3-11}
 &   & DCGraph & GCond & GCond-X & SFGC & GEOM & Random & Herding & K-Center & \textbf{PLGC (Ours)} & \\ \hline

\multirow{3}{*}{ Citeseer}
& 0.9\% & 66.8$_{\pm 1.5}$ & 70.5$_{\pm 1.2}$ & 71.4$_{\pm 0.8}$ & 71.4$_{\pm 0.5}$ & 73.0$_{\pm 0.5}$ & 54.4$_{\pm 4.4}$ & 57.1$_{\pm 1.5}$ & 52.4$_{\pm 2.8}$  & \textbf{68.3$_{\pm 1.2}$} & \multirow{3}{*}{ 71.7$_{\pm 0.1}$} \\
& 1.80\% & 59.0$_{\pm 0.5}$ & 70.6$_{\pm 0.9}$ & 69.8$_{\pm 1.1}$ & 72.4$_{\pm 0.4}$ & 74.3$_{\pm 0.1}$ & 64.2$_{\pm 1.7}$ & 66.7$_{\pm 1.0}$ & 64.3$_{\pm 1.0}$ & \textbf{69.1$_{\pm 1.1}$} &  \\
& 3.60\% & 66.3$_{\pm 1.5}$ & 69.8$_{\pm 1.4}$ & 69.4$_{\pm 1.4}$ & 70.6$_{\pm 0.7}$ & 73.3$_{\pm 0.4}$ & 69.1$_{\pm 0.1}$ & 69.0$_{\pm 0.1}$ & 69.1$_{\pm 0.1}$  & \textbf{70.5$_{\pm 1.1}$} &  \\
\hline

\multirow{3}{*}{ Cora} 
& 1.30\% & 67.3$_{\pm 1.9}$ & 79.8$_{\pm 1.2}$ & 75.9$_{\pm 1.2}$ & 80.$_{\pm 0.4}$ &82.5$_{\pm 0.4}$ & 63.6$_{\pm 3.7}$ & 67.0$_{\pm 1.3}$ & 64.0$_{\pm 2.3}$  & \textbf{81.1$_{\pm 0.7}$} &  \multirow{3}{*}{81.2$_{\pm 0.2}$}\\
& 2.60\% &  67.6$_{\pm 3.5}$ & 80.1$_{\pm 0.6}$ & 75.7$_{\pm 0.9}$ & 81.7$_{\pm 0.5}$ & 83.6$_{\pm 0.3}$  & 72.8$_{\pm 1.1}$ & 73.4$_{\pm 1.0}$ & 73.2$_{\pm 1.2}$ & \textbf{81.6$_{\pm 0.6}$} &  \\
& 5.20\% & 67.7$_{\pm 2.2}$ & 79.3$_{\pm 0.3}$ & 76.0$_{\pm 0.3}$ & 81.6$_{\pm 0.8}$ & 82.8$_{\pm 0.7}$ &  76.8$_{\pm 0.1}$ & 76.8$_{\pm 0.1}$ & 76.7$_{\pm 0.1}$ &  \textbf{80.7$_{\pm 0.4 }$} &  \\
\hline

\multirow{3}{*}{\begin{tabular}[c]{@{}l@{}}Ogbn\\ arxiv\end{tabular}}
& 0.05\% & 58.6$_{\pm 0.4}$ & 59.2$_{\pm 1.1}$ & 61.3$_{\pm 0.5}$ & 65.5$_{\pm 0.7}$ & 65.5$_{\pm 0.6}$ & 47.1$_{\pm 3.9}$ & 52.4$_{\pm 1.8}$ & 47.2$_{\pm 3.0}$ &  \textbf{68.0$_{\pm  0.1 }$} &  \multirow{3}{*} {\textbf{71.4$_{\pm 0.1}$}}\\
& 0.25\% &  59.9$_{\pm 0.3}$ & 63.2$_{\pm 0.3}$ & 64.2$_{\pm 0.4}$ & 66.1$_{\pm 0.4}$ & 68.8$_{\pm 0.2}$ & 57.3$_{\pm 1.1}$ & 58.6$_{\pm 1.2}$ & 56.8$_{\pm 0.8}$ & \textbf{69.6$_{\pm 0.2}$} &  \\
& 0.50\% &  59.5$_{\pm 0.3}$ & 64.0$_{\pm 1.4}$ & 63.1$_{\pm 0.5}$ & 66.8$_{\pm 0.4}$ & 69.6$_{\pm 0.2}$ & 60.0$_{\pm 0.9}$ & 60.4$_{\pm 0.8}$ & 60.3$_{\pm 0.4}$ & \textbf{69.8$_{ \pm 0.1 }$} &  \\
\hline

\multirow{3}{*}{Flickr} 
& 0.10\% &  46.3$_{\pm 0.2}$ & 46.5$_{\pm 0.4}$ & 45.9$_{\pm 0.1}$ & 46.6$_{\pm 0.2}$ & 47.1$_{\pm 0.1}$ & 41.8$_{\pm 2.0}$ & 42.5$_{\pm 1.8}$ & 42.0$_{\pm 0.7}$ & \textbf{45.8 $_{\pm 0.1 }$} &  \multirow{3}{*}{47.2$_{\pm 0.1}$} \\
& 0.50\% &  45.9$_{\pm 0.1}$ & 47.1$_{\pm 0.1}$ & 45.0$_{\pm 0.1}$ & 47.0$_{\pm 0.1}$ & 47.0$_{\pm 0.2}$ & 44.0$_{\pm 0.4}$ & 43.9$_{\pm 0.9}$ & 43.2$_{\pm 0.1}$ & \textbf{46.8 $_{\pm 0.2 }$} &  \\
& 1.00\% &  44.6$_{\pm 0.1}$ & 47.1$_{\pm 0.1}$ & 45.0$_{\pm 0.1}$ & 47.1$_{\pm 0.1}$ & 47.3$_{\pm 0.3}$  & 44.6$_{\pm 0.2}$ & 44.4$_{\pm 0.6}$ & 44.1$_{\pm 0.4}$ & \textbf{47.3$_{\pm 0.1}$} &  \\
\hline

\multirow{3}{*}{Reddit} 
& 0.01\% &  88.2$_{\pm 0.2}$ & 88.0$_{\pm 1.8}$ & 88.4$_{\pm 0.4}$ & 89.7$_{\pm 0.2}$ & 91.1$_{\pm 0.4}$ & 46.1$_{\pm 4.4}$ & 53.1$_{\pm 2.5}$ & 46.6$_{\pm 2.3}$ &  \textbf{89.2 $_{\pm 0.2 }$} & \multirow{3}{*}{\textbf{93.9$_{\pm 0.0}$}} \\
& 0.20\% &  90.5$_{\pm 1.2}$ & 90.1$_{\pm 0.5}$ & 88.8$_{\pm 0.4}$ & 90.3$_{\pm 0.3}$ & 91.5$_{\pm 0.4}$ & 66.3$_{\pm 1.9}$ & 71.0$_{\pm 1.6}$ & 58.5$_{\pm 2.1}$ & \textbf{90.6 $_{\pm 0.2 }$} &  \\
& 3.00\% &  90.8$_{\pm 0.9}$ & OOM & 89.2$_{\pm 0.2}$ & 91.0$_{\pm 0.3}$ & 93.7$_{\pm 0.1}$ &  78.4$_{\pm 1.3}$ & 81.3$_{\pm 1.1}$ & 82.2$_{\pm 1.4}$ & \textbf{93.3 $_{\pm 0.1 }$} & \\ 
\hline

\end{tabular}
}
\caption{\small \textit{Standard Supervised Settings for Node Classification:} PLGC consistantly ourperforms the existing self-supervided methods, while achieving similar performance compared to the existing supervised methods even when they are trained using 100\% clean labels.
(OOM: out-of-memory error.)
}
\label{table_clean_single_node}
\end{table*}

\subsection{Node Classification}

\subsubsection{Single-Source: Clean-Label Fine-Tuning}
We first evaluate node classification under a single-source setting where all node labels are available during fine-tuning.
We compare PLGC with graph coarsening \cite{huang2021scaling}, graph coreset methods (Random, Herding \cite{herding}, K-Center \cite{kcenter}), dataset condensation approaches (DC-Graph \cite{zhao2020dataset,gcond}), and state-of-the-art supervised graph condensation methods including GCond-X, GCond \cite{gcond}, SFGC \cite{sfgc}, and GEOM \cite{geom}.

Supervised condensation methods follow a two-stage pipeline: 
(i) a condensation stage that synthesizes a labeled condensed graph using a GCN, and 
(ii) supervised training of a GNN on the condensed graph for downstream prediction.
In contrast, PLGC produces an unlabeled condensed graph via self-supervised learning.
We therefore fine-tune only the prediction head ($\phi$) using clean labels while freezing the backbone.

Table~\ref{table_clean_single_node} reports node classification accuracy across five datasets and three compression ratios $r \in (0,1)$, where the condensed graph size is $N' = rN$.
Here, $N$ denotes the total number of nodes in transductive settings and the number of training nodes in inductive settings.
PLGC consistently outperforms existing self-supervised methods and achieves performance comparable to supervised condensation approaches under clean-label fine-tuning.

\begin{wraptable}{r}{0.6\textwidth}
\vspace{-1em}
\centering
\resizebox{0.59\textwidth}{!}{
\begin{tabular}{l|c|cccc}
\hline
\multicolumn{1}{c}{\begin{tabular}[c]{@{}c@{}}3-Shot \\ Classification\end{tabular}} & r & \textbf{GCond} & CTGC & \textbf{\begin{tabular}[c]{@{}c@{}}PLGC \\ (Ours)\end{tabular}} & \textbf{\begin{tabular}[c]{@{}c@{}}Whole \\ Dataset\end{tabular}} \\
\hline
Cora & 2.60\% & 55.6 $\pm$  3.0 & 70.2$\pm$2.5 & \textbf{73.4 $\pm$ 3.2} & 65.9 $\pm$ 4.3 \\
Citeseer & 1.80\% & 55.9 $\pm$ 0.8 & 63.2 $\pm$ 1.8 & \textbf{63.4 $\pm$ 3.7} & 57.2 $\pm$ 2.4 \\
Ogbn-Arxiv & 0.05\% & 44.6 $\pm$ 0.4 & 48.1 $\pm$ 0.5 & \textbf{48.7 $\pm$ 0.3} & 46.3 $\pm$ 3.7 \\
Reddit & 0.1\% & 70.0 $\pm$ 1.2 & 86.4 $\pm$ 0.2 & \textbf{88.3 $\pm$ 0.2} & 85.4 $\pm$ 0.3 \\
\hline
\end{tabular}
}
\caption{\small Compression with CTGC, the only existing self-supervised graph condensation method for 3-shot classification tasks.}
\label{table_ssgc_comparison}
\end{wraptable}

\begin{figure*}[t]
  \centering
  \begin{subfigure}{0.19\textwidth}
    \centering
    \includegraphics[height=2.3cm, width=\linewidth]{./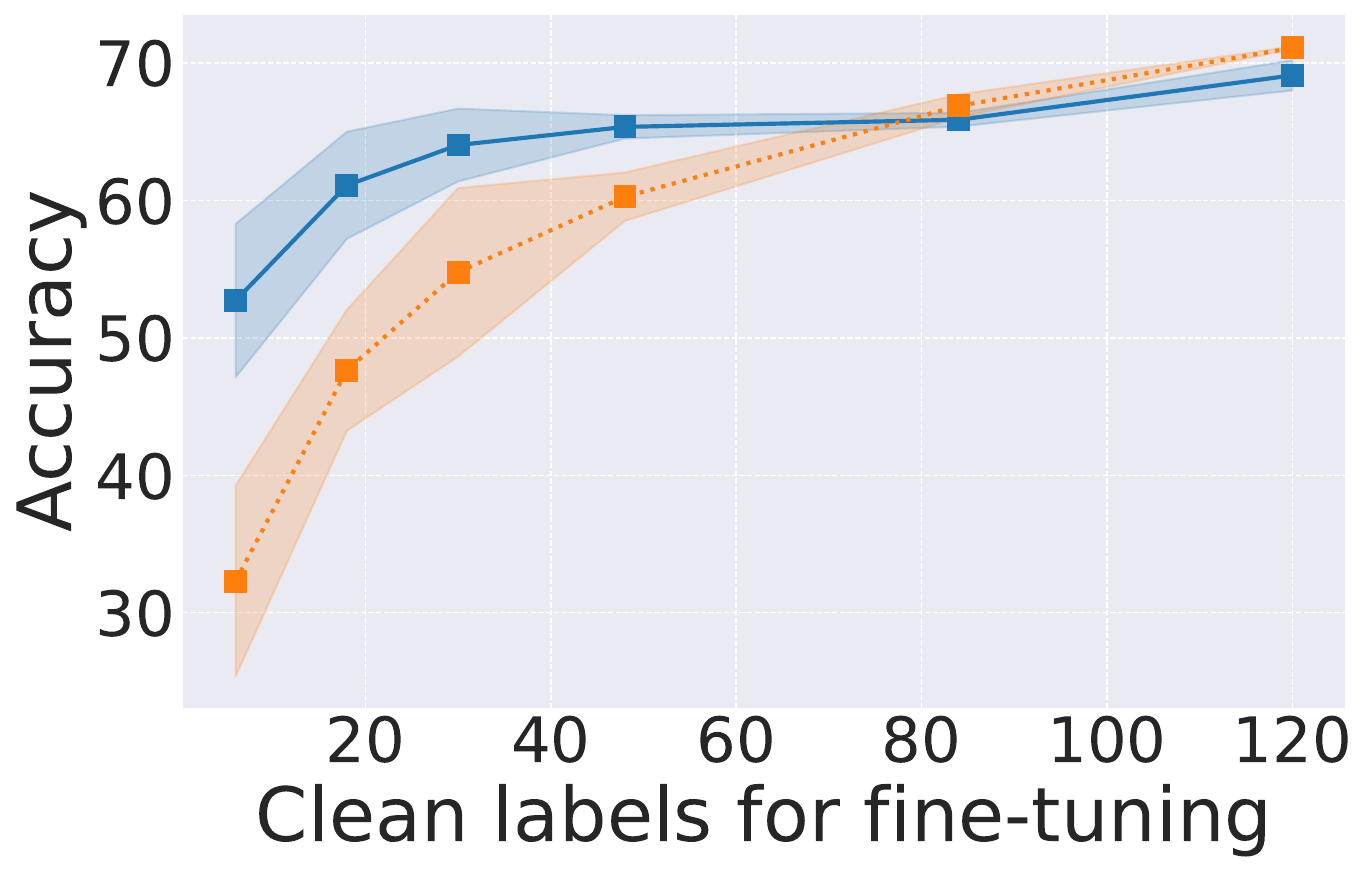}
    \caption{Citeseer}
    \label{fig:abl1_citeseer}
  \end{subfigure}
  \hspace{0pt} 
  \begin{subfigure}{0.19\textwidth}
    \centering
    \includegraphics[height=2.3cm, width=\linewidth]{./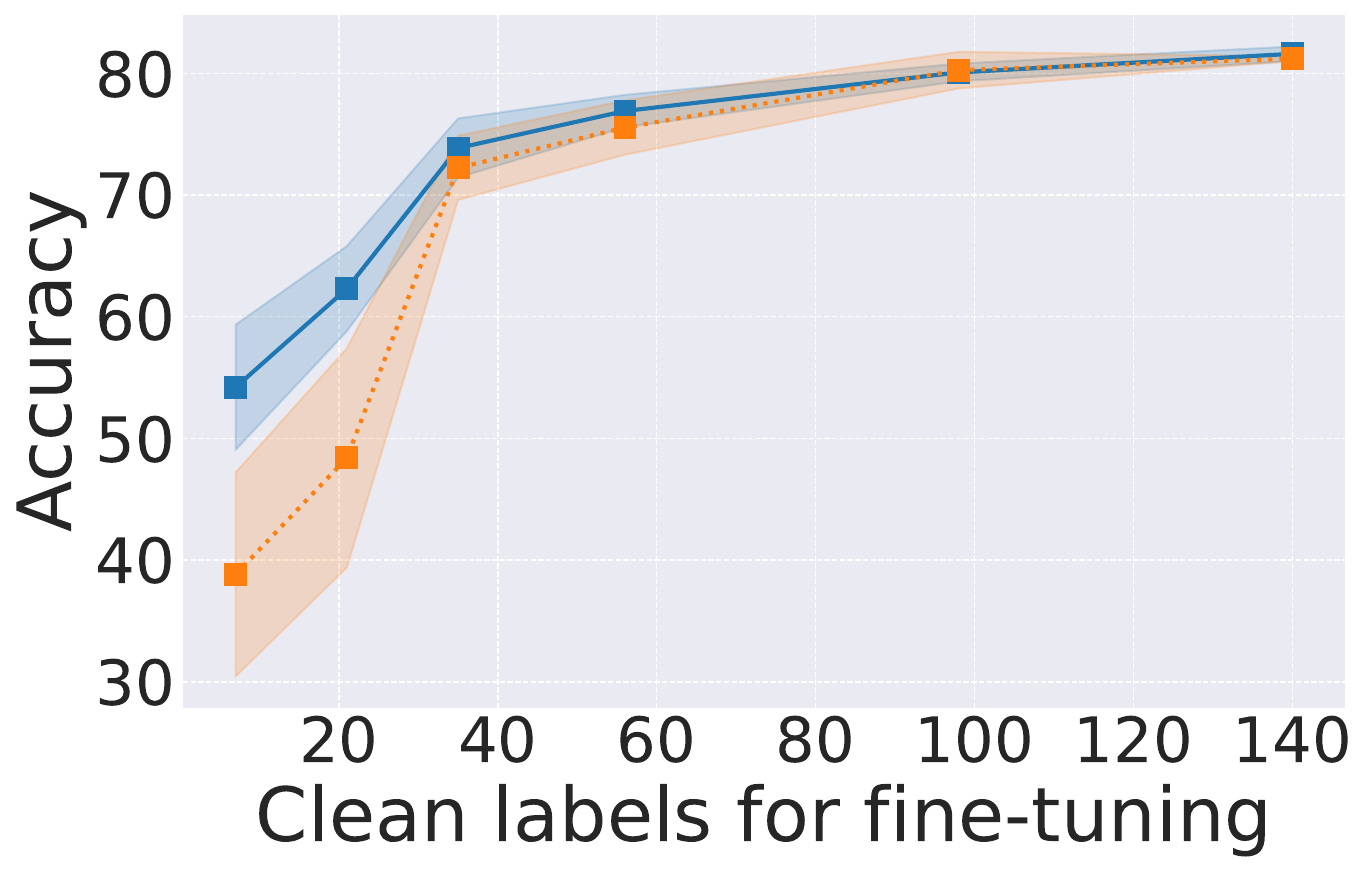}
    \caption{Cora}
    \label{fig:abl1_cora}
  \end{subfigure}
  \hspace{0pt} 
  \begin{subfigure}{0.19\textwidth}
    \centering
    \includegraphics[height=2.3cm, width=\linewidth]{./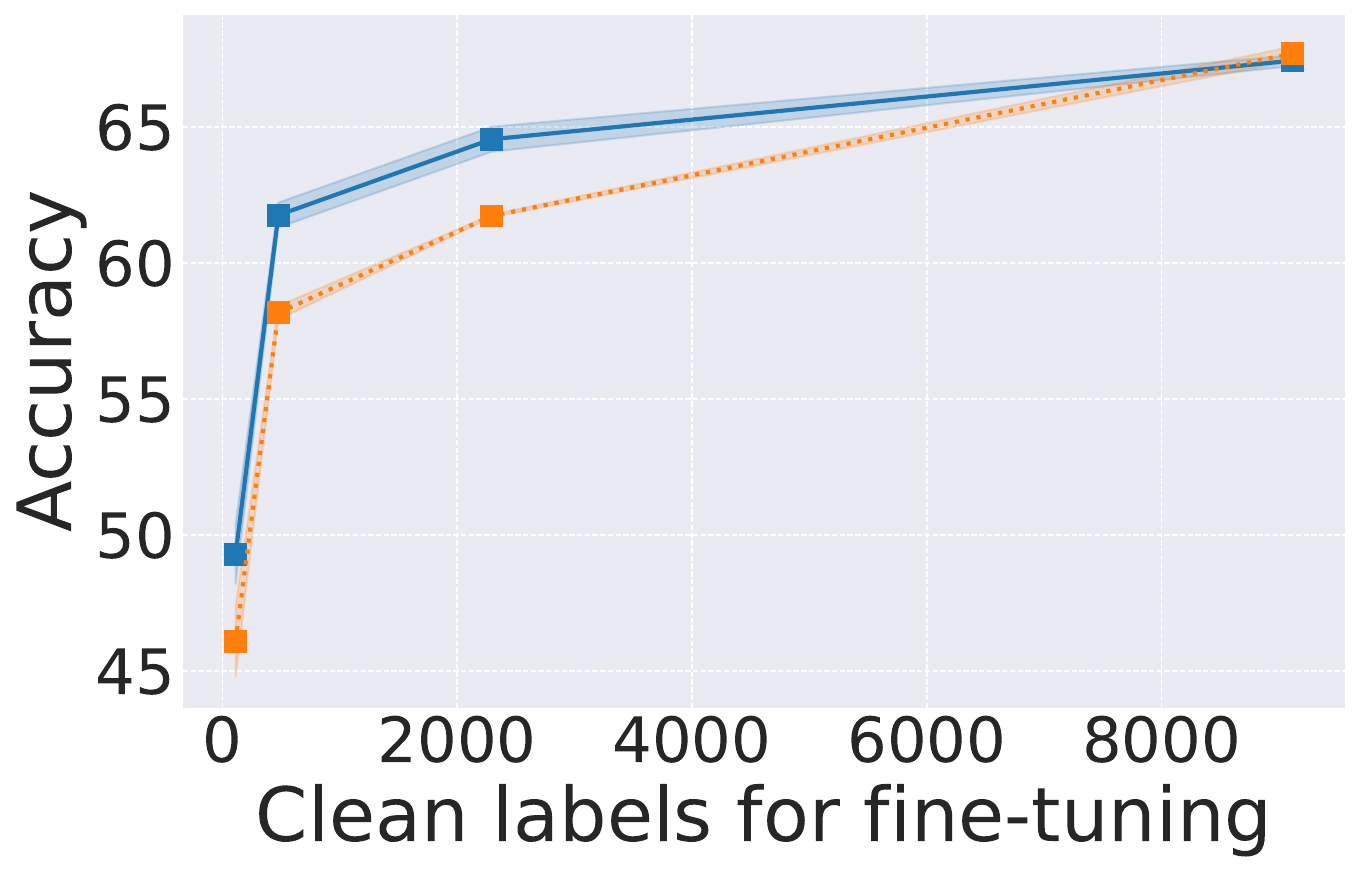}
    \caption{\small Ogbn-arxiv}
    \label{fig:abl1_ogbn}
  \end{subfigure}
  \begin{subfigure}{0.19\textwidth}
    \centering
    \includegraphics[height=2.3cm, width=\linewidth]{./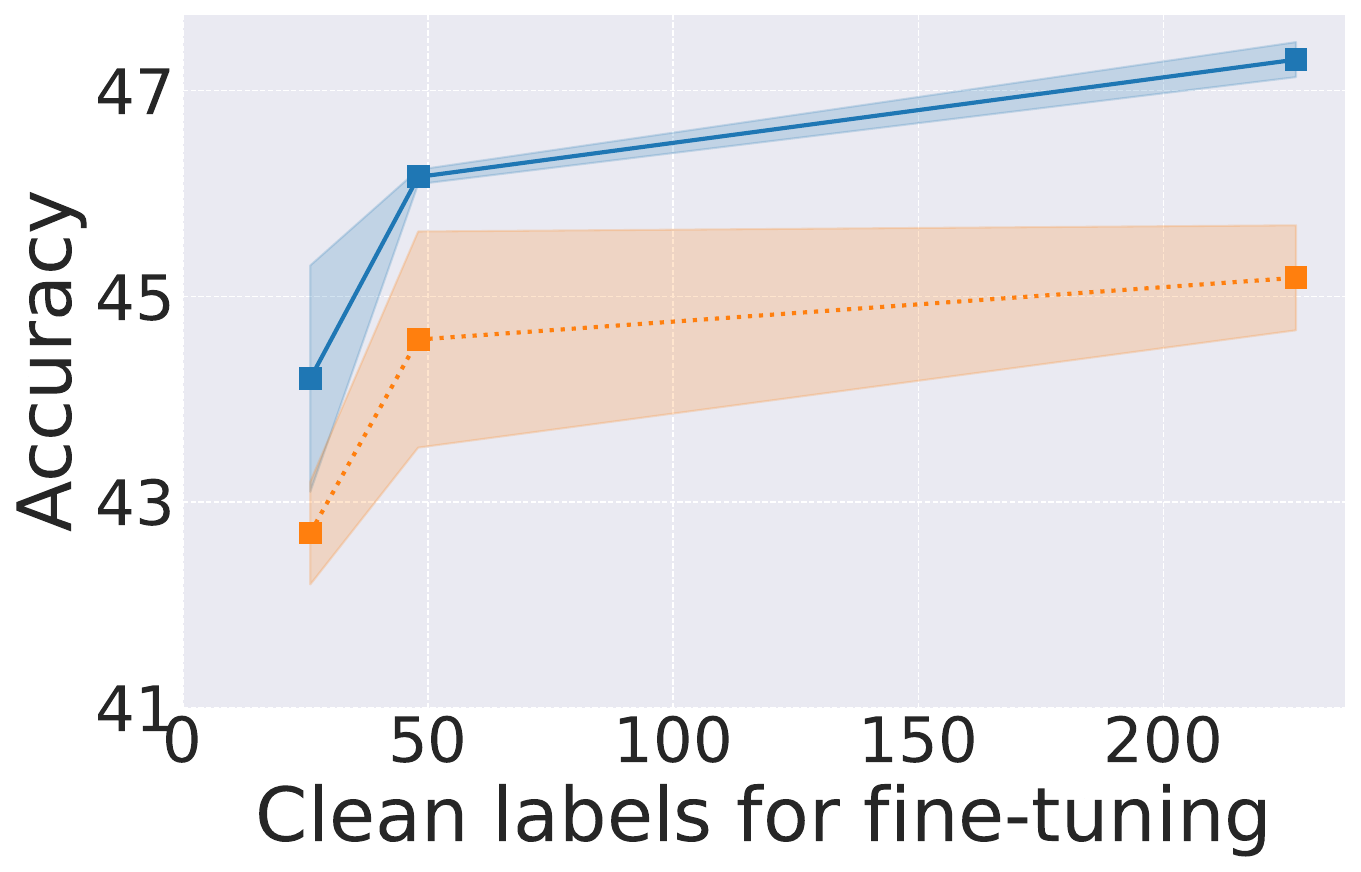}
    \caption{Flickr}
    \label{fig:abl1_flickr}
  \end{subfigure}
  \hspace{0pt} 
  \begin{subfigure}{0.19\textwidth}
    \centering
    \includegraphics[height=2.3cm, width=\linewidth]{./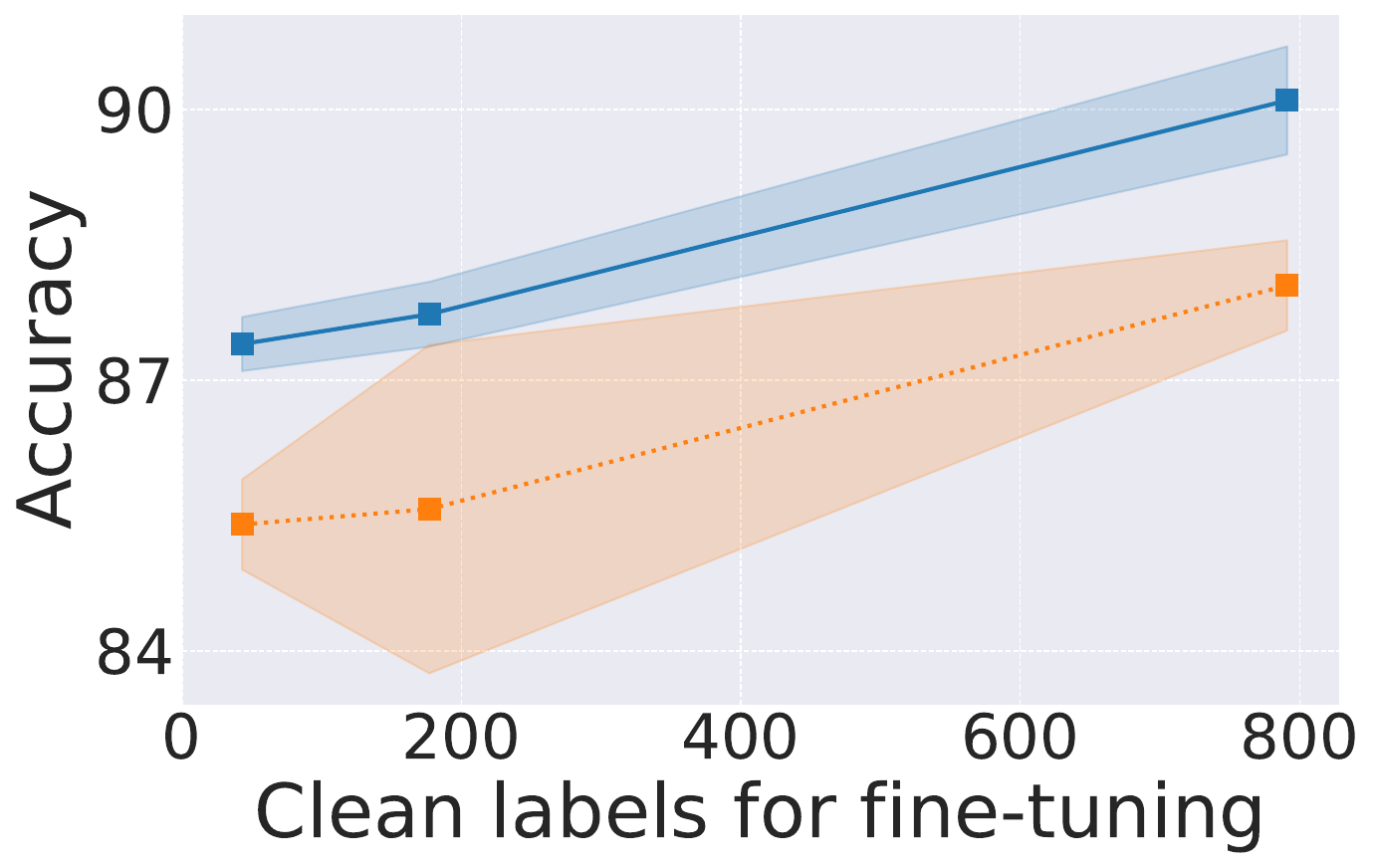}
    \caption{Reddit}
    \label{fig:abl1_reddit}
  \end{subfigure}  

  \begin{subfigure}{0.6\textwidth}
    \centering
    \includegraphics[width=0.6\linewidth]{./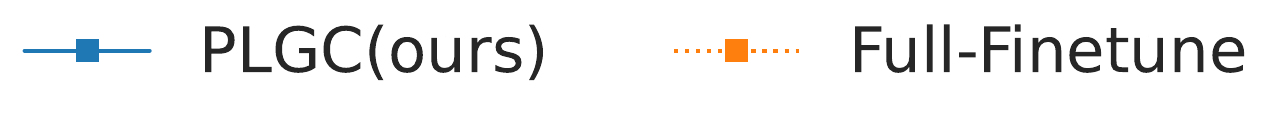}
    \label{fig_node_legend_abl}
  \end{subfigure}
  \caption{\small \textit{Single-Source with Label Noise for Node classification:} By leveraging the self-supervised condensed graphs, PLGC achieves better performance at lesser number of clean labelled nodes during funetuning stage, and demonstrates faster convergence compared to the `full-finetune' baselines.
  } 
  \label{fig_abl1}
\end{figure*}

\subsubsection{Single-Source: Few-Shot Self-Supervised Fine-Tuning}
We next evaluate PLGC under a few-shot setting, where only a small number of labeled nodes per class are available during fine-tuning.
Specifically, we consider a 3-shot setup, where the training graph is obtained from a single source for pseudo-label learning, and only three labeled nodes per class are used for downstream adaptation.

Table~\ref{table_ssgc_comparison} compares PLGC with CTGC, the only existing self-supervised graph condensation method applicable to node classification, on Cora, Citeseer, Ogbn-Arxiv, and Reddit.
PLGC consistently outperforms CTGC across all datasets.

Figure~\ref{fig_abl1} further shows node classification accuracy as the number of labeled nodes increases.
We compare against a \emph{Full Fine-Tune} baseline, where a GNN is trained directly on labeled nodes without condensation.
PLGC achieves higher accuracy and faster convergence, demonstrating improved label efficiency and stronger representations.

\begin{wraptable}{r}{0.5\textwidth}
\vspace{-1em}
\centering
\resizebox{0.46\textwidth}{!}{
\begin{tabular}{l|ccc}
\hline
Datasets & r & \# Clean Labels & \# Training Labels \\
\hline
Citeseer & 1.80\% & 18 & 120 \\
Cora & 2.60\% & 21 & 140 \\
Ogbn-arxiv & 0.05\% & 113 & 90,941 \\
Flickr & 1.00\% & 26 & 44,625 \\
Reddit & 1.00\% & 43 & 153,932
\\ \hline
\end{tabular}
}
\caption{\small Compression ratios and available clean nodes for finetuning for `Single-source with Label noise' setup.}
\label{table_source1}
\vspace{-0.5em}
\end{wraptable}
\subsubsection{Single-Source: Label Noise}
We evaluate robustness to label noise under a single-source setting by corrupting a fraction of node labels during condensation.
Fine-tuning is performed using a small subset of clean-labeled nodes, with at least one node per class.
We compare PLGC with GCond and SFGC.

For PLGC, the prediction head is randomly initialized and trained solely on clean labels.
In contrast, GCond and SFGC reuse their prediction heads trained during condensation and fine-tune them using clean labels.
Compression ratios and fine-tuning label counts are summarized in Table~\ref{table_source1}.

\begin{figure*}[!t]
  \centering
  \begin{subfigure}{ 0.19\textwidth}
    \centering
    \includegraphics[ height=2.3cm,width=\linewidth]{./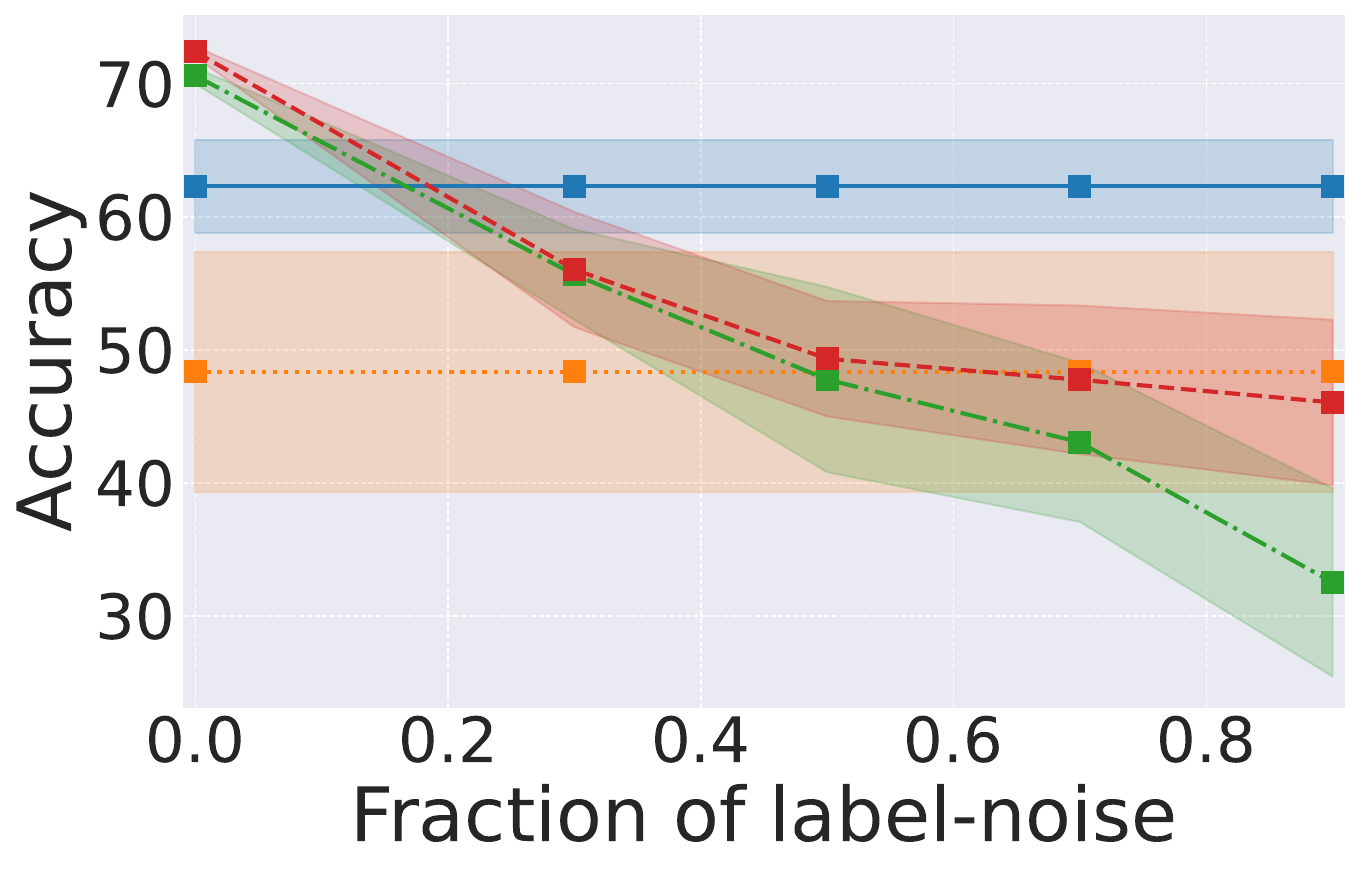}
    \caption{Citeseer}
    \label{fig:node_citeseer_1}
  \end{subfigure}
  \hspace{0pt} 
  \begin{subfigure}{0.19\textwidth}
    \centering
    \includegraphics[ height=2.3cm, width=\linewidth]{./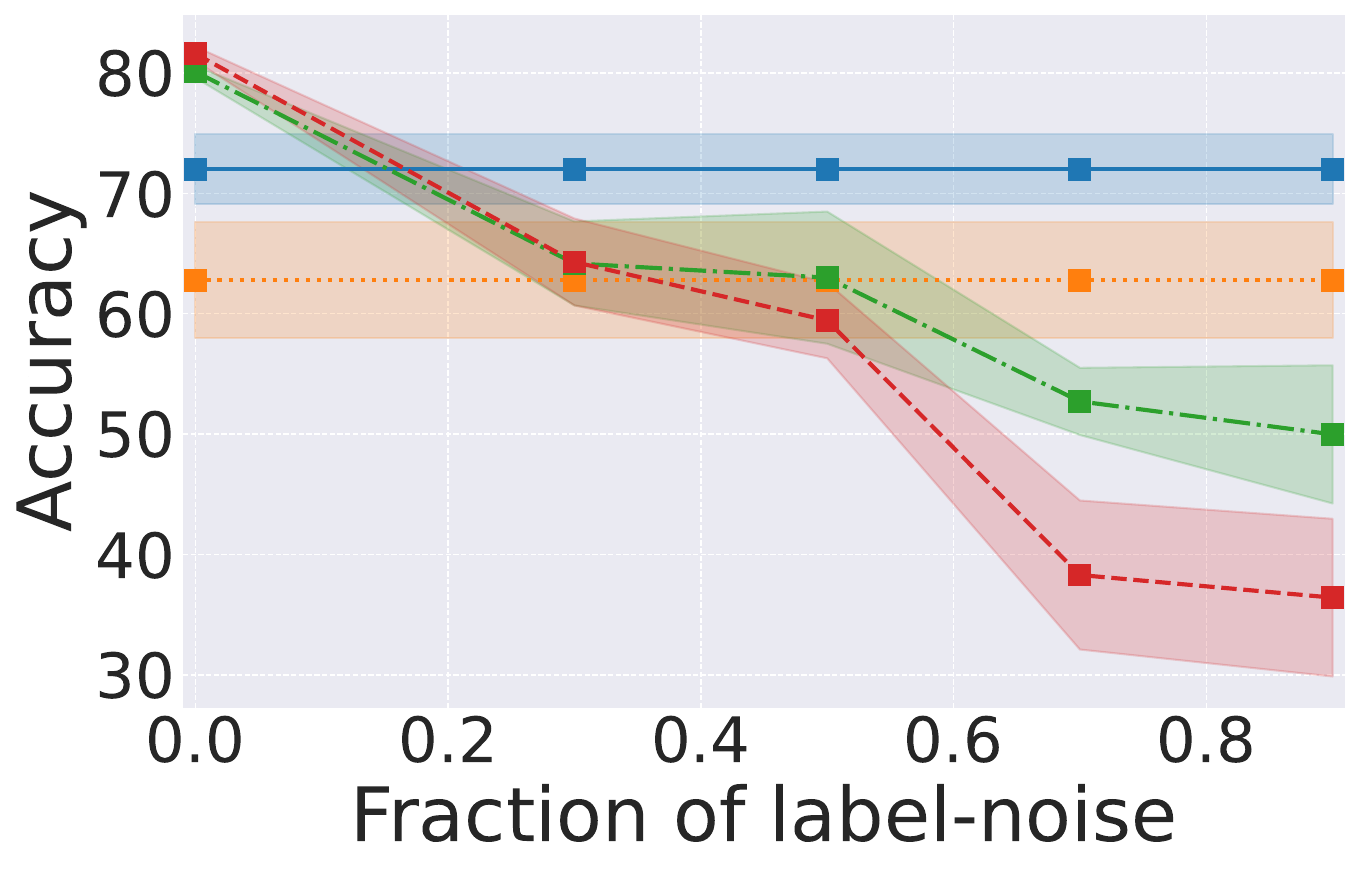}
    \caption{Cora}
    \label{fig:node_cora_1}
  \end{subfigure}
  \hspace{0pt} 
  \begin{subfigure}{0.19\textwidth}
    \centering
    \includegraphics[height=2.3cm,width=\linewidth]{./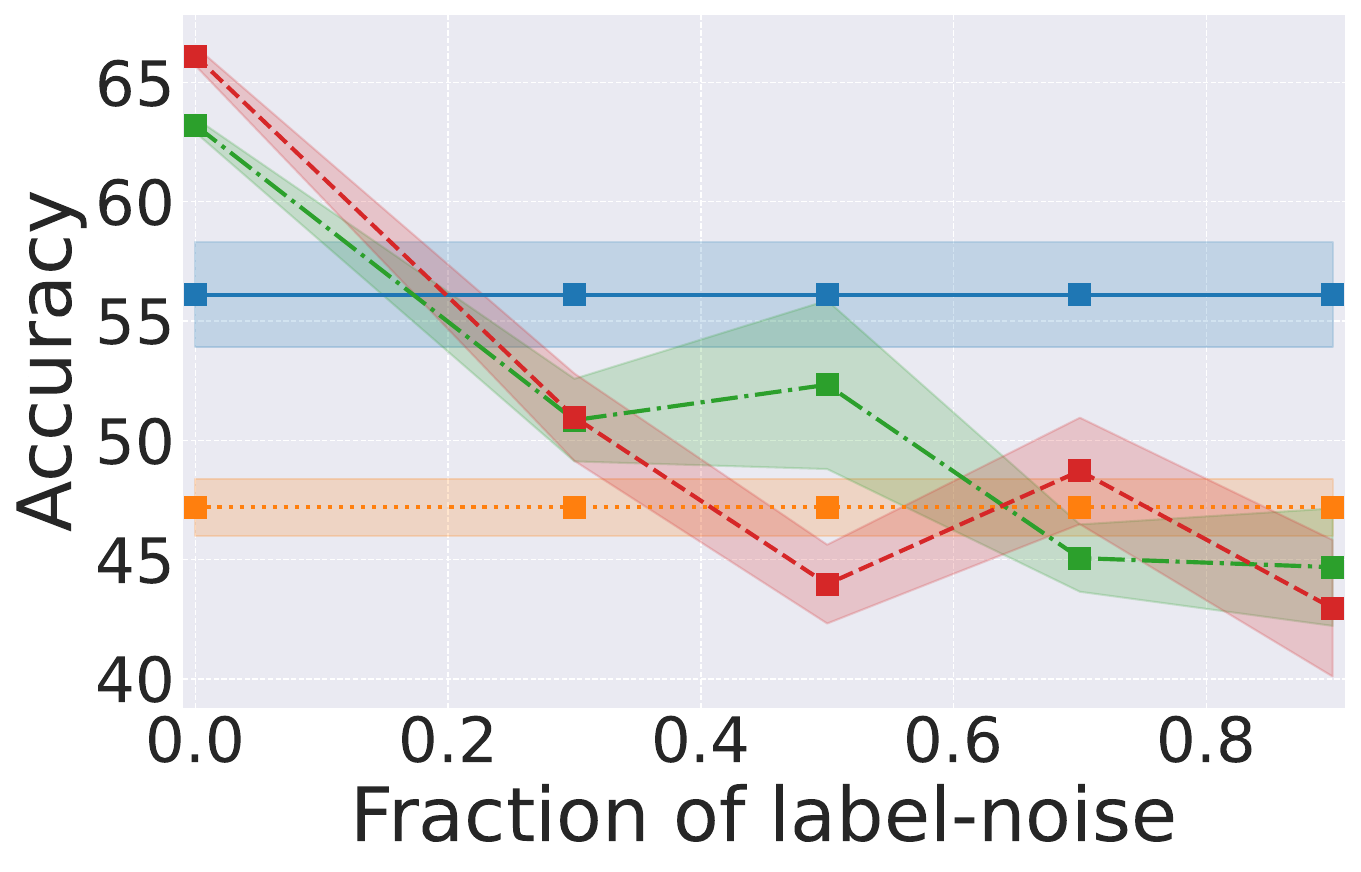}
    \caption{Ogbn-arxiv}
    \label{fig:node_ogbn_1}
  \end{subfigure}
  \begin{subfigure}{0.19\textwidth}
    \centering
    \includegraphics[height=2.3cm,width=\linewidth]{./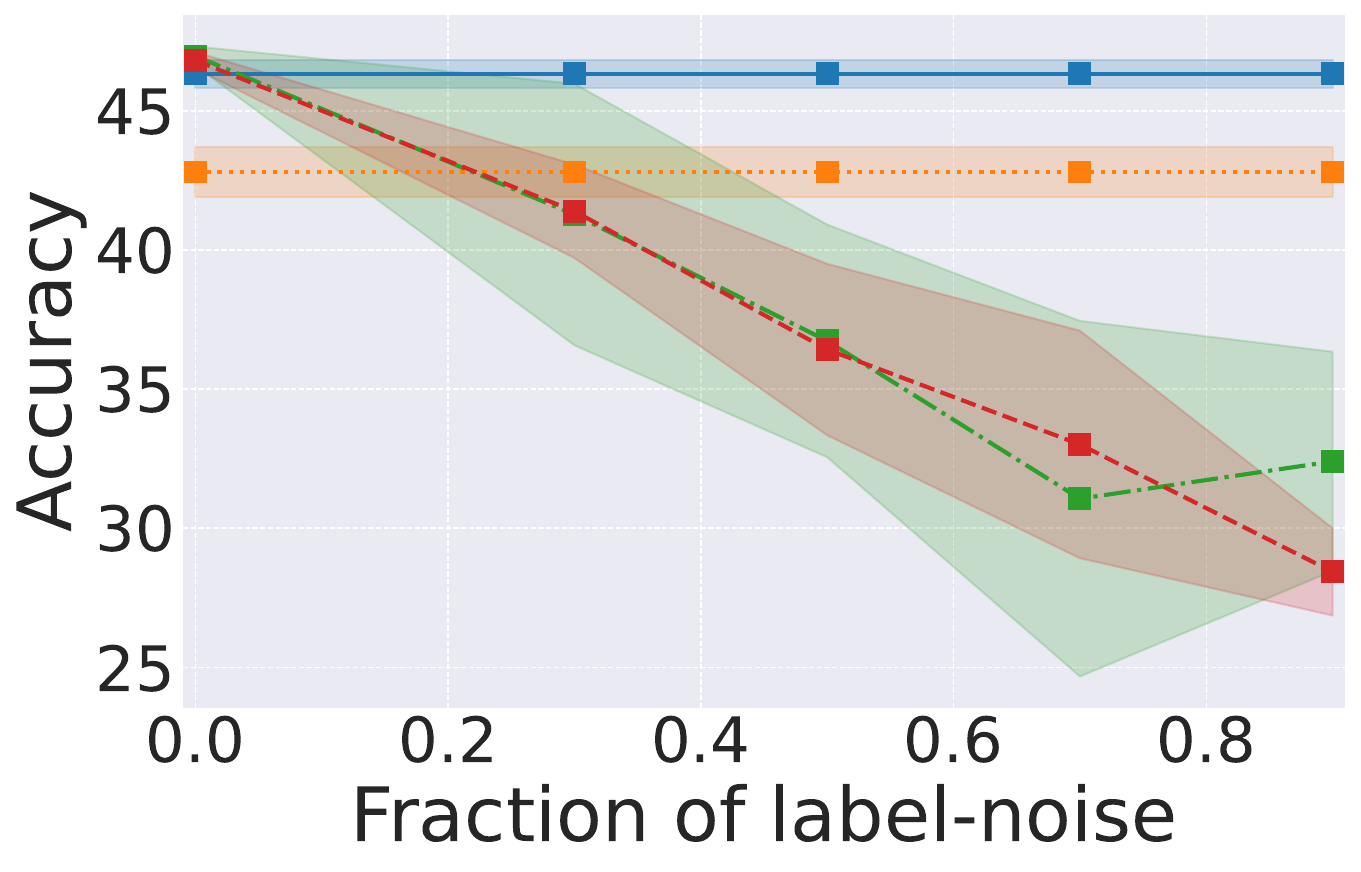}
    \caption{Flickr}
    \label{fig:node_flickr_1}
  \end{subfigure}
  \hspace{0pt} 
  \begin{subfigure}{0.19\textwidth}
    \centering
    \includegraphics[height=2.3cm, width=\linewidth]{./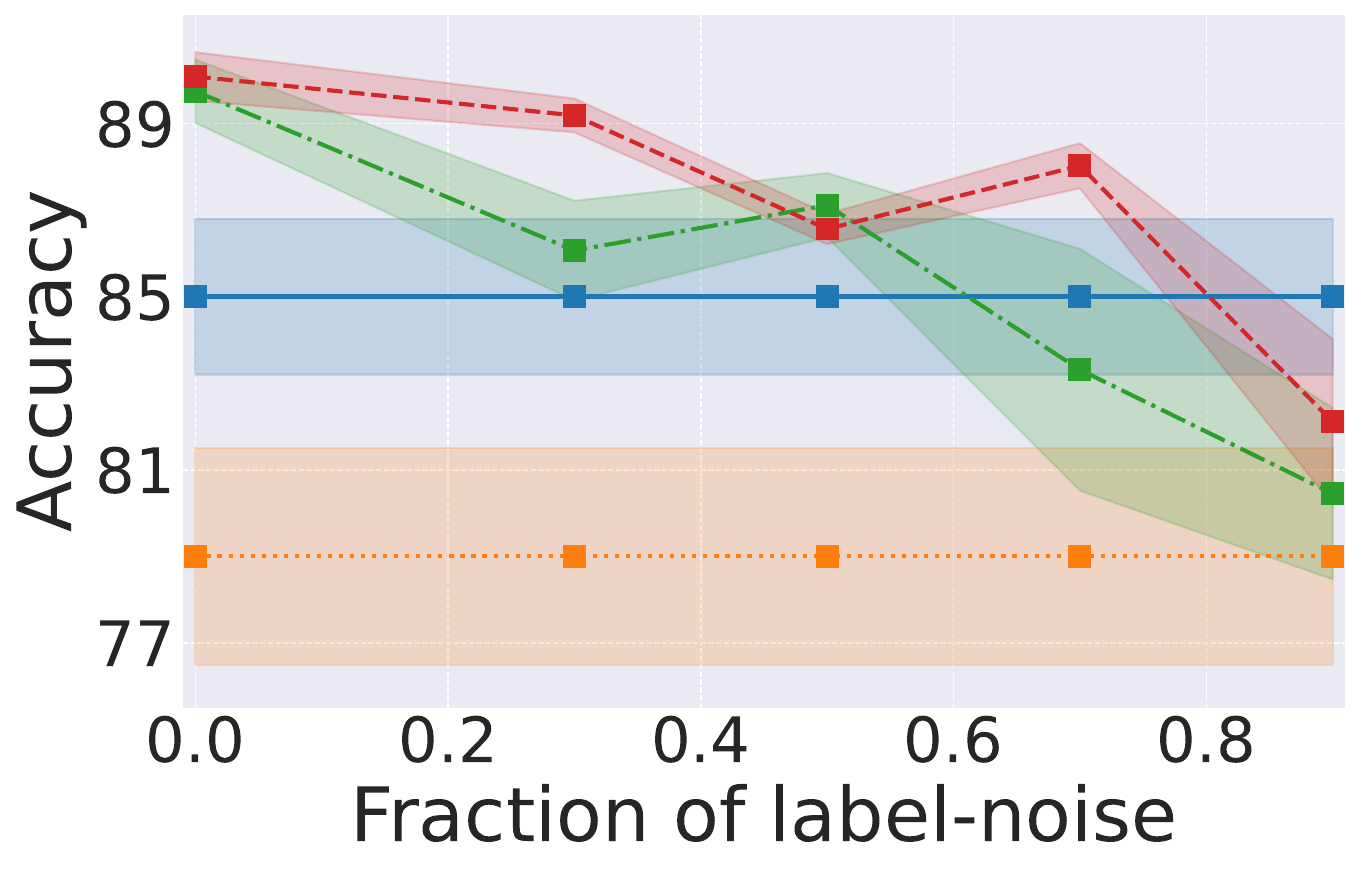}
    \caption{Reddit}
    \label{fig:node_reddit_1}
  \end{subfigure}  

  \begin{subfigure}{0.6\textwidth}
    \centering
    \includegraphics[ width=\linewidth]{./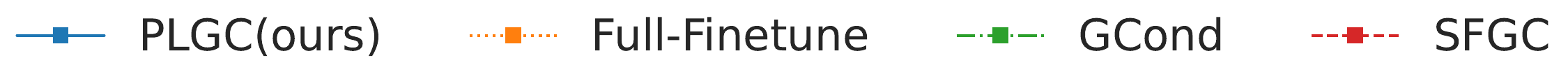}
    \label{fig_node_legend_1}
  \end{subfigure}
\vspace{-1.5em}
\caption{\small \textit{Single-Source with Label Noise for Node classification:} The superiority of our PLGC method is clearly visible as node-level label noises are increased, producing a robust, stable accuracy compared to the supervised GCond and SFGC methods.}
  \vspace{-1.5em}
  \label{fig_node_source1}
\end{figure*}

Figure~\ref{fig_node_source1} shows that supervised condensation methods perform well at $0\%$ noise but degrade rapidly as noise increases, eventually reaching a saturation point where noisy condensed graphs limit downstream performance.
While large datasets (e.g., Reddit) retain sufficient clean labels under moderate noise, this assumption is often unrealistic in practice.
PLGC remains robust across noise levels and significantly outperforms supervised methods at higher noise rates, despite using only $\approx 1$ clean-labeled node per class during fine-tuning.

\begin{wraptable}{r}{0.5\textwidth}
\vspace{-1em}
\centering
\resizebox{0.46\textwidth}{!}{
\begin{tabular}{l|ccc}
\hline
Datasets & r & \# Clean Labels & \# Training Labels \\
\hline
Citeseer & 1.80\% & 24 & 120 \\
Cora & 2.60\% & 28 &140 \\
Ogbn-arxiv & 0.05\% & 165 &90,941 \\
Flickr & 1.00\% & 33 & 44,625 \\
Reddit & 1.00\% & 123 & 153,932
\\ \hline
\end{tabular}
}
\caption{\small Compression ratios and clean nodes during finetuning for `Multi-source with Label noise' setup.}
\label{table_source3}
\vspace{-2em}
\end{wraptable}

\begin{figure*}[htbp]
  \centering
  \begin{subfigure}[b]{0.19\textwidth}
    \centering
    \includegraphics[height=2.3cm, width=\linewidth]{./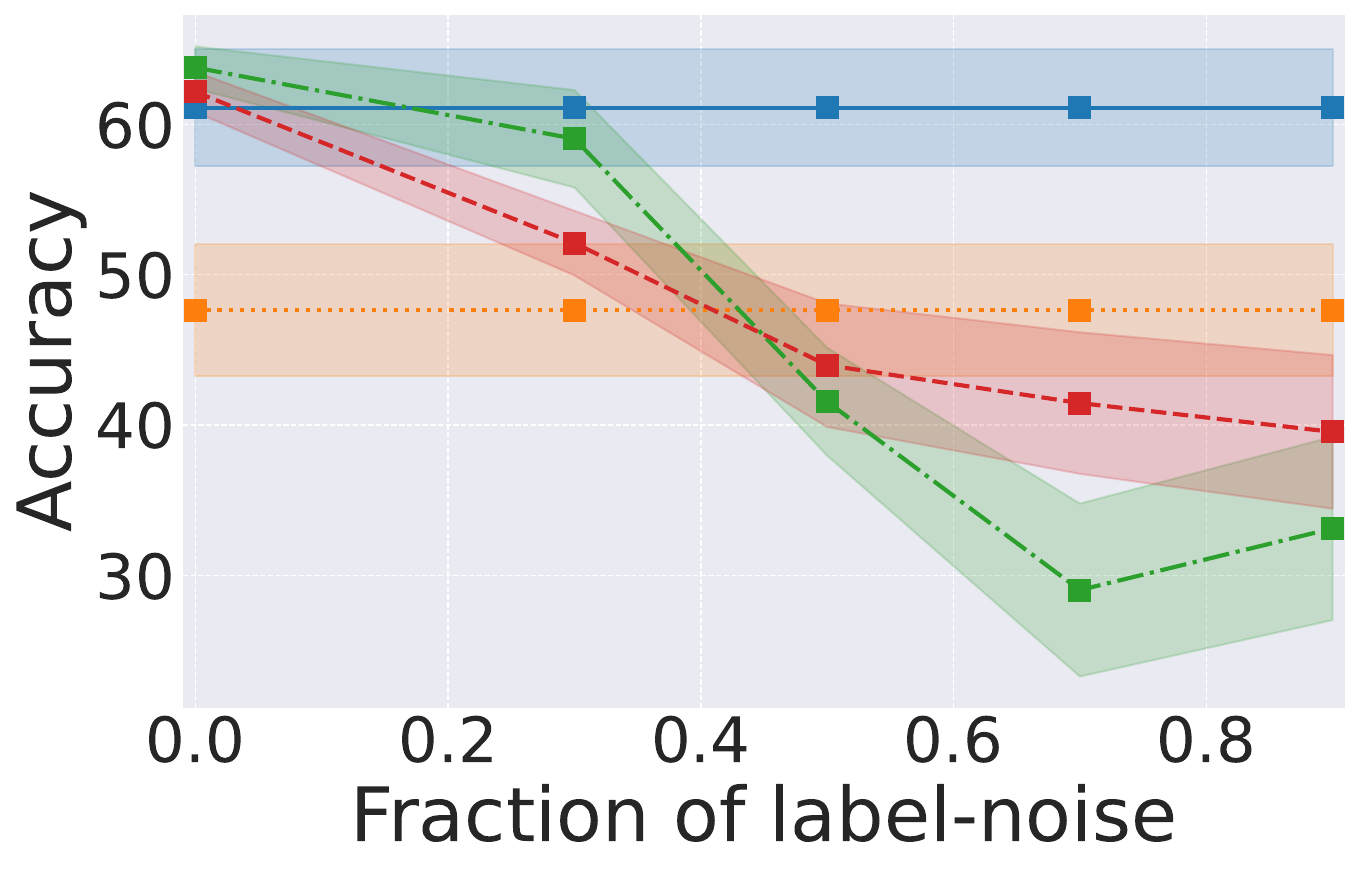}
    \caption{ Citeseer}\label{fig:node_citeseer_3}
  \end{subfigure}
  \begin{subfigure}[b]{0.19\textwidth}
    \centering
    \includegraphics[height=2.3cm, width=\linewidth]{./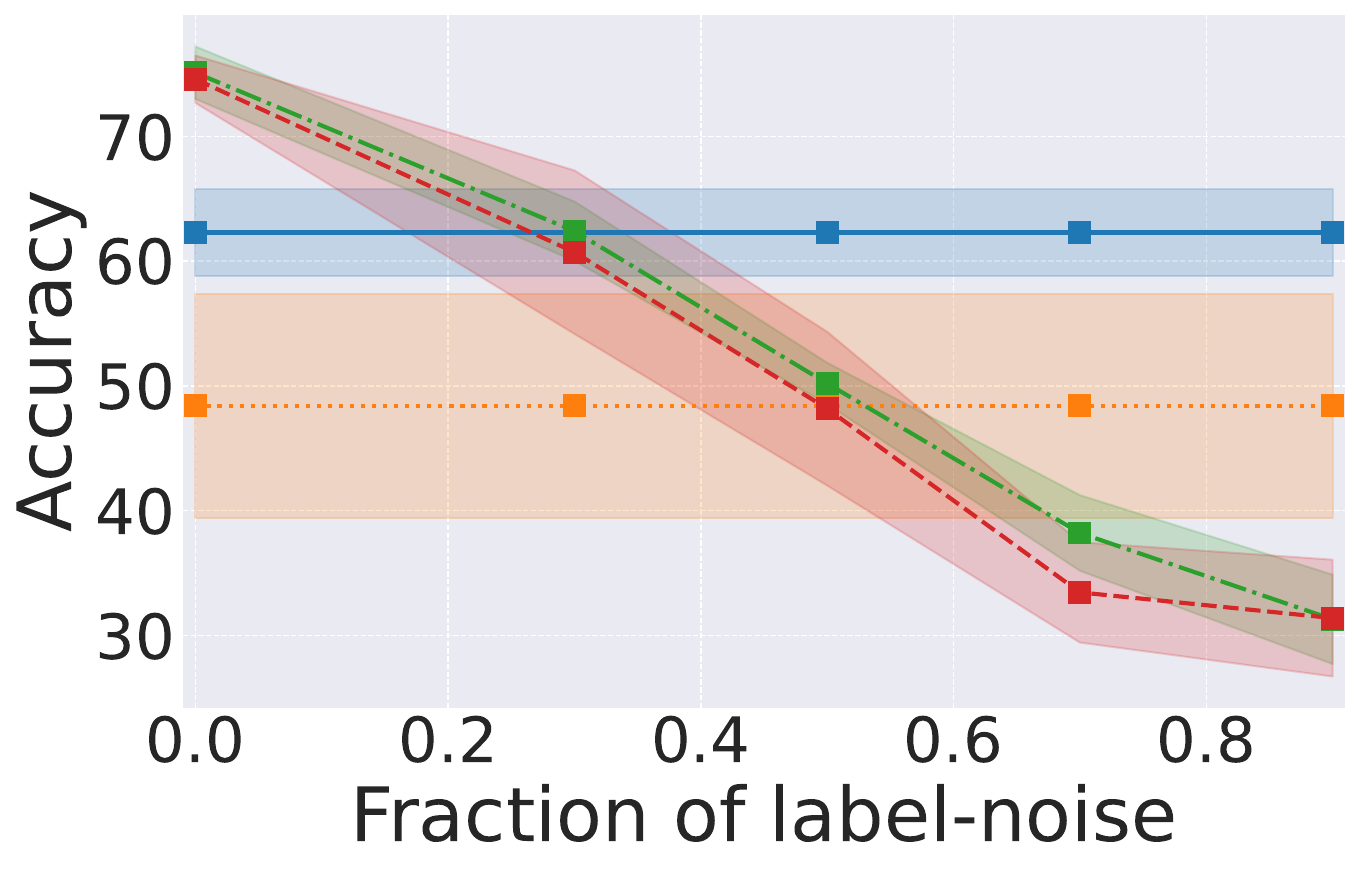}
    \caption{Cora}\label{fig:node_cora_3}
  \end{subfigure}
  \begin{subfigure}[b]{0.19\textwidth}
    \centering
    \includegraphics[height=2.3cm,width=\linewidth]{./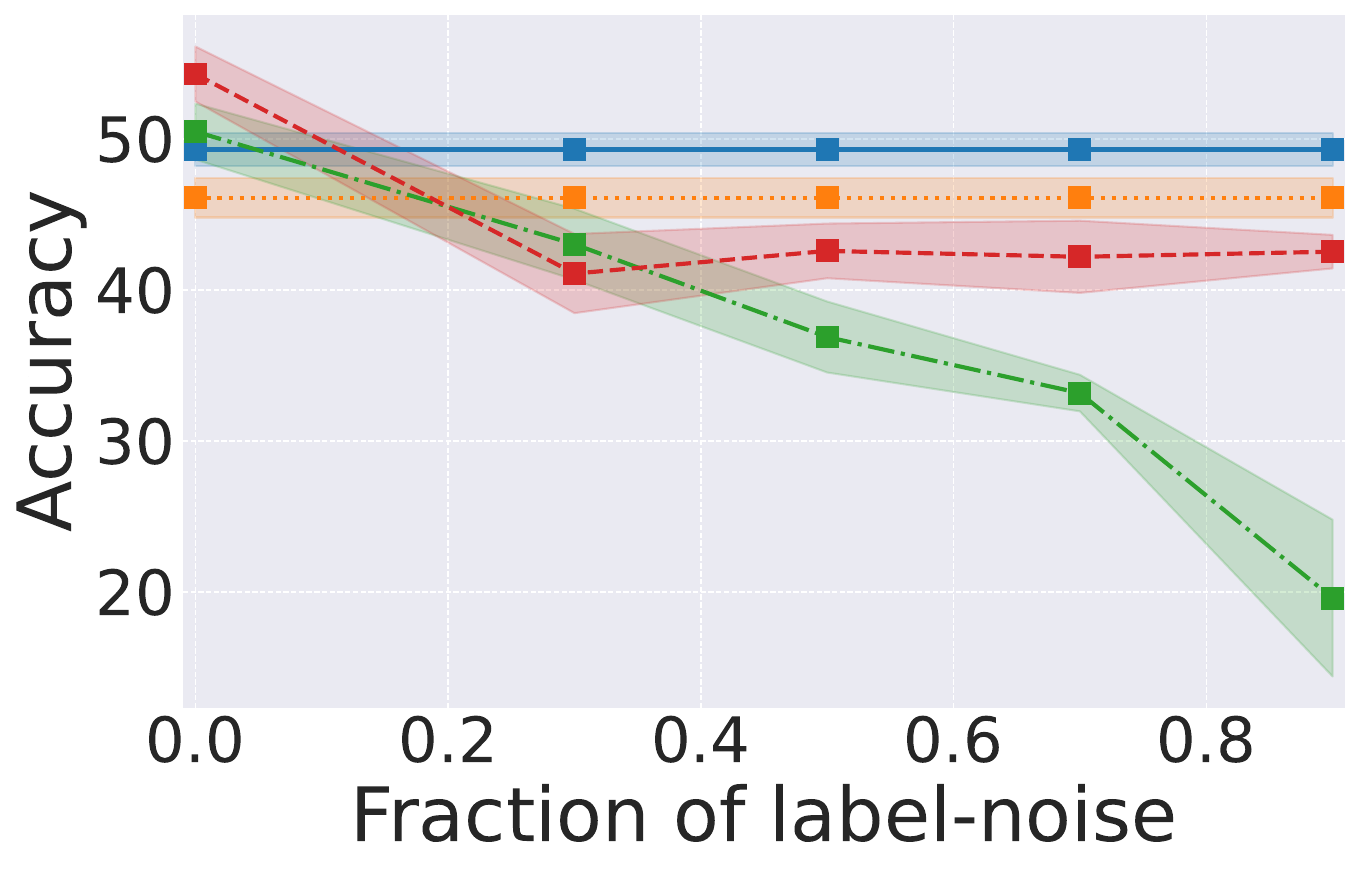}
    \caption{Ogbn-arxiv} \label{fig:node_ogbn_3}
  \end{subfigure}
  \begin{subfigure}[b]{0.19\textwidth}
    \centering
    \includegraphics[height=2.3cm, width=\linewidth]{./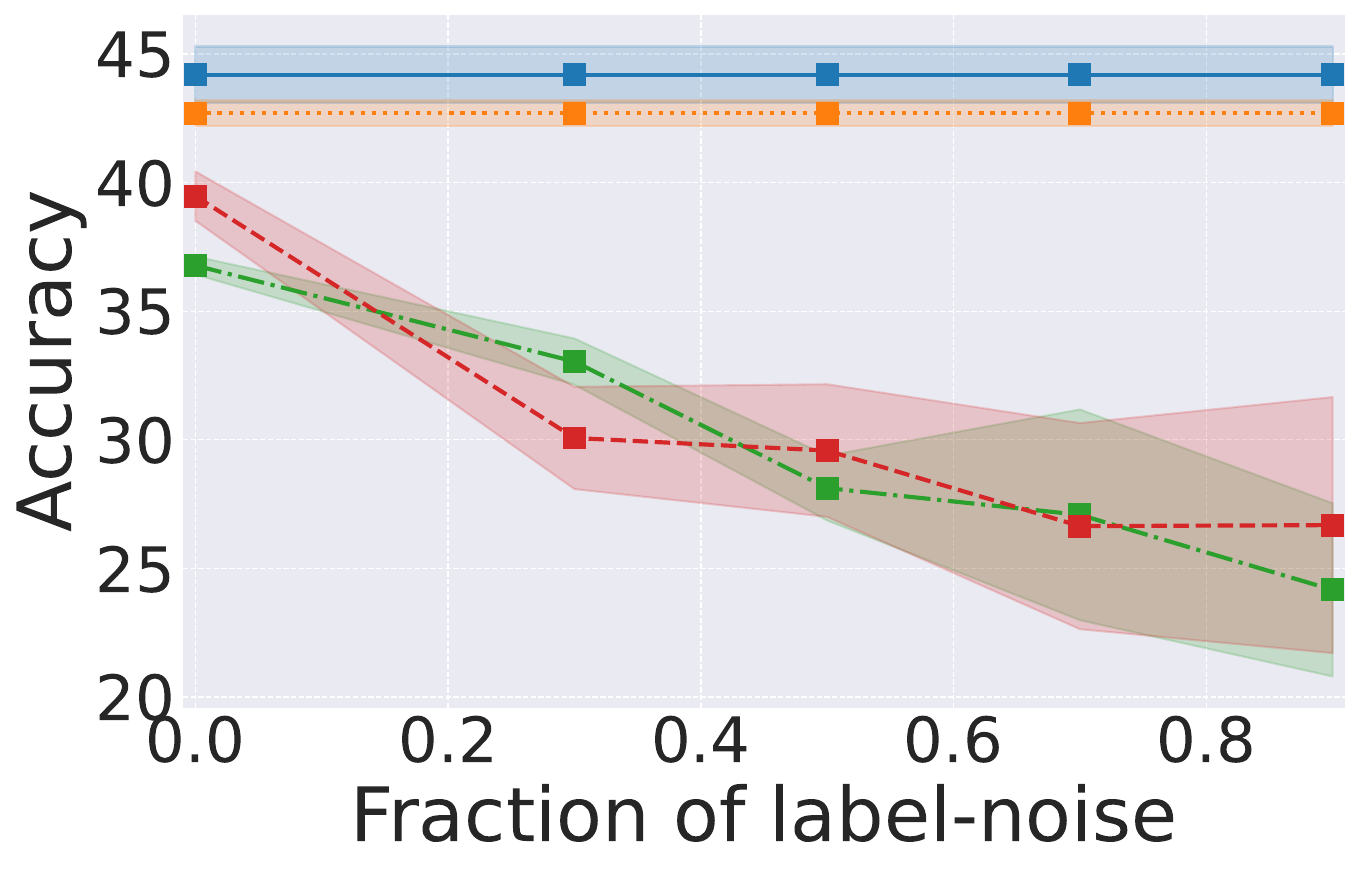}
    \caption{Flickr} \label{fig:node_flickr_3}
  \end{subfigure}
  \begin{subfigure}[b]{0.19\textwidth}
    \centering
    \includegraphics[height=2.3cm, width=\linewidth]{./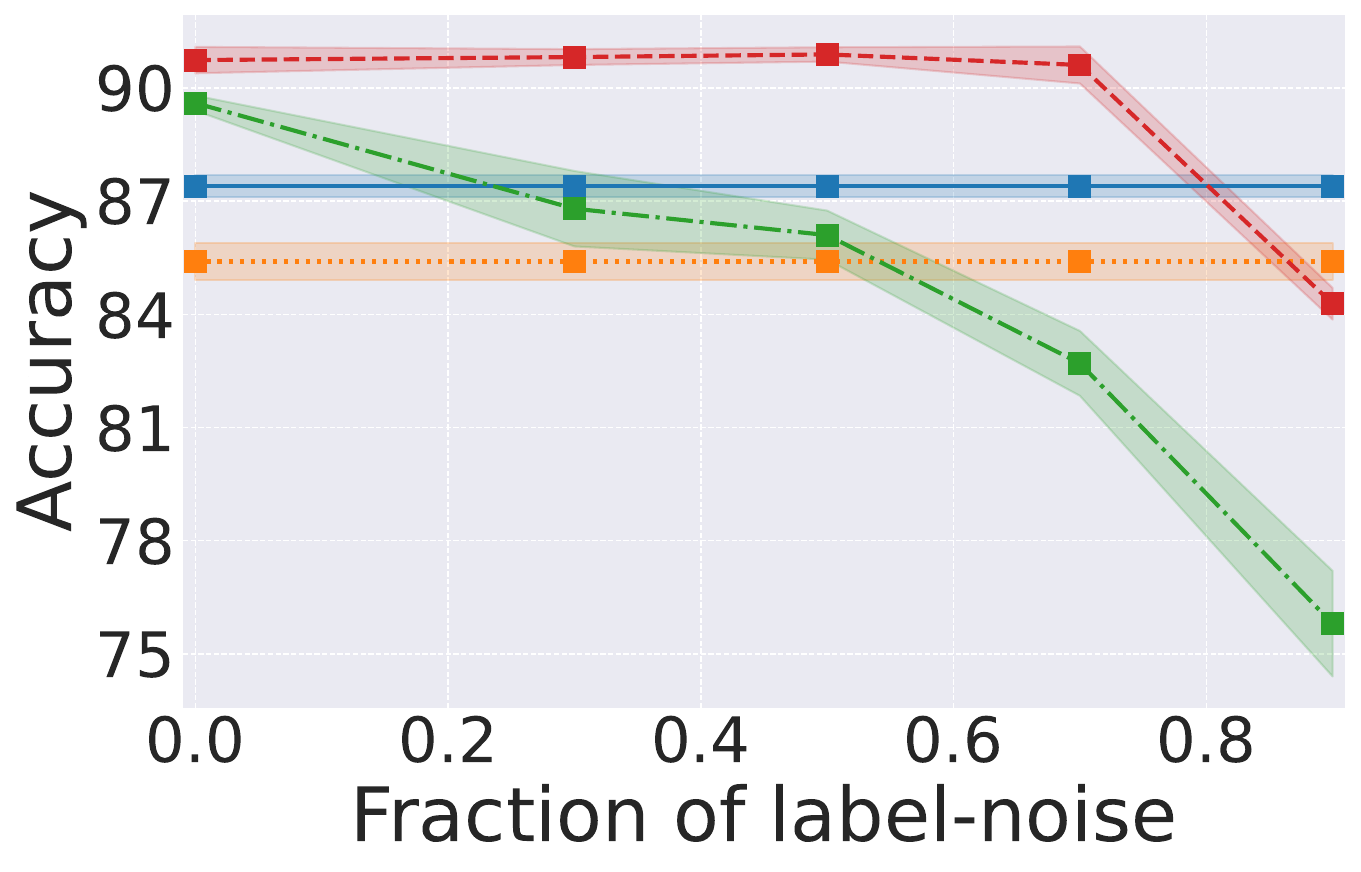}
    \caption{Reddit} \label{fig:node_reddit_3}
  \end{subfigure}
  
  \begin{subfigure}[b]{0.6\textwidth}
    \centering
    \includegraphics[width=\linewidth]{./images/swav_results_noRange/legend.pdf}
  \end{subfigure}
  \caption{\small \textit{Multi-Source with Label Noise for Node classification:} The superiority of our PLGC method is clearly visible as node-level label noises are increased, producing a robust, stable accuracy compared to the supervised GCond and SFGC methods.}
  \label{fig_node_source3}
  \vspace{-1em}
\end{figure*}

\subsubsection{Multi-Source: Label Noise}
We extend the evaluation to a multi-source setting with three subgraphs, where label noise is introduced independently within each source during condensation.
Fine-tuning uses a small set of clean-labeled nodes pooled across all sources.

Condensed graphs are generated separately from each source and used to train a downstream GNN.
PLGC initializes the prediction head randomly and trains it only on clean labels, whereas GCond and SFGC fine-tune their pre-trained heads.
Experimental configurations are listed in Table~\ref{table_source3}.

Figure~\ref{fig_node_source3} shows that supervised methods degrade rapidly as noise increases, particularly on smaller graphs.
In contrast, PLGC consistently achieves higher accuracy under high noise, demonstrating strong robustness and generalization in heterogeneous multi-source environments.

\subsection{Link Prediction}

We evaluate condensed graphs on downstream link prediction under a single-source noisy setting.
In transductive settings, edges are split into training, validation, and test sets in a 1:1:2 ratio.
In inductive settings, the condensed graphs generated for node classification are reused.
During fine-tuning, only the prediction head ($\phi$) is updated, while the GNN backbone ($\text{GNN}_\theta$) remains frozen.
Performance is evaluated using AUROC \cite{auroc_2006}.

Figure~\ref{fig_link_source1} compares PLGC with GCond, SFGC, and a \emph{Full Fine-Tune} baseline.
PLGC achieves superior performance under increasing label noise for all datasets except Flickr.
Notably, both PLGC and supervised condensation methods (under clean labels) outperform full fine-tuning on most datasets, highlighting the importance of node-level optimization even for link prediction.
Supervised methods remain sensitive to label noise due to reliance on node-level supervision during condensation, whereas PLGC maintains stable performance through self-supervised learning.

\subsection{Key Observations and Summary}

\noindent\textbf{(i) Robustness to Label Noise.}
PLGC consistently outperforms supervised condensation methods under moderate to high label noise, demonstrating strong robustness to noisy supervision.

\noindent\textbf{(ii) Label Efficiency.}
PLGC achieves competitive performance using only a few labeled nodes per class during fine-tuning, substantially reducing annotation requirements.

\noindent\textbf{(iii) Multi-Source Generalization.}
Across heterogeneous sources, PLGC maintains stable performance, whereas supervised methods degrade rapidly due to inconsistent or noisy labels.

\noindent\textbf{(iv) Transferability Across Tasks.}
Condensed graphs produced by PLGC generalize effectively from node classification to link prediction, underscoring the quality of the learned representations.

Overall, these results demonstrate that PLGC provides a robust and label-efficient framework for graph dataset condensation under noisy and multi-source settings.

\begin{figure*}[!ht]
  \centering
  \begin{subfigure}{0.19\textwidth}
    \centering
    \includegraphics[height=2.3cm, width=\linewidth]{./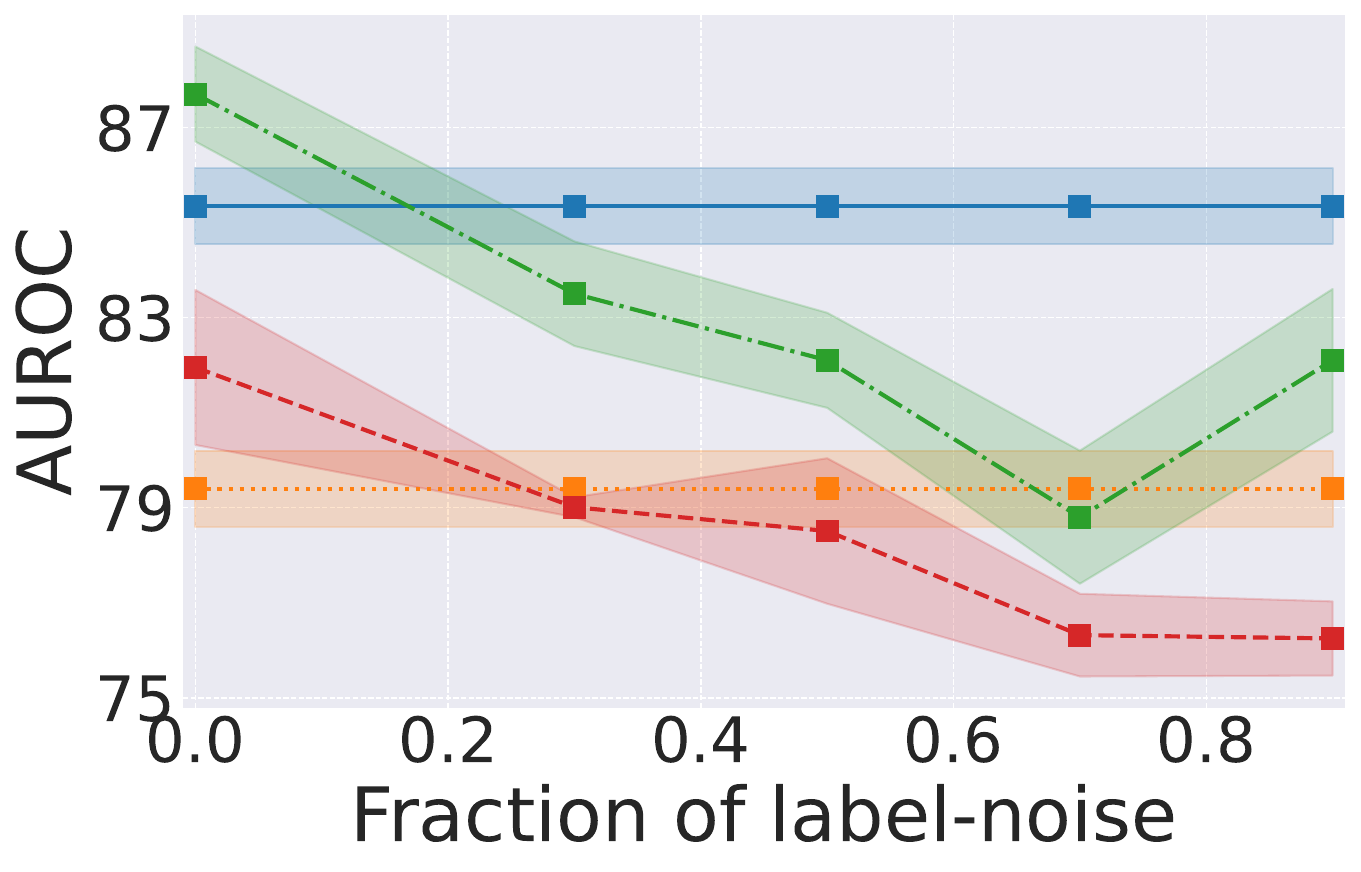}
    \caption{Citeseer}
    \label{fig:fig1}
  \end{subfigure}
  \hspace{0pt} 
  \begin{subfigure}{0.19\textwidth}
    \centering
    \includegraphics[height=2.3cm, width=\linewidth]{./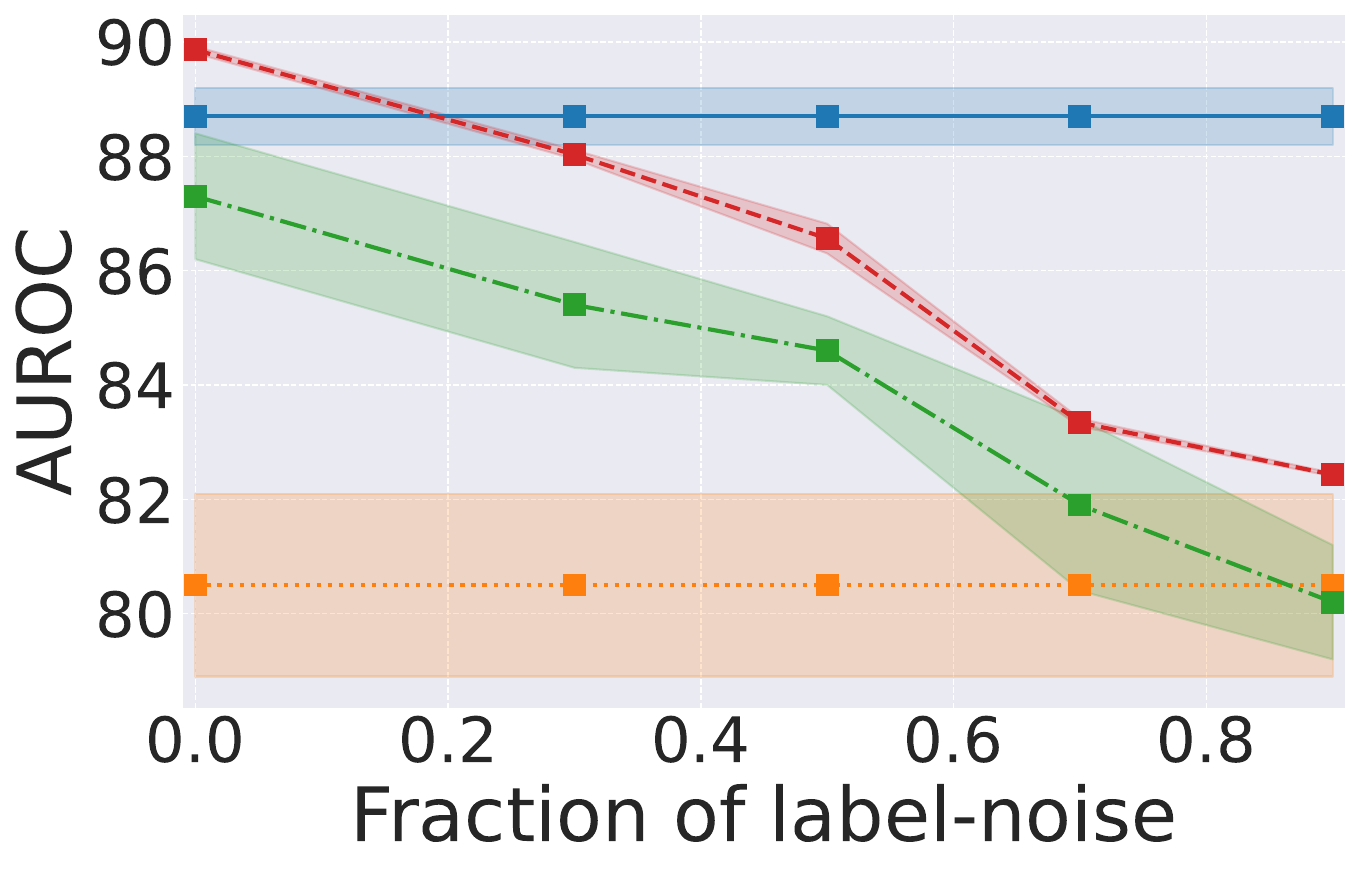}
    \caption{Cora}
    \label{fig:fig2}
  \end{subfigure}
  \hspace{0pt} 
  \begin{subfigure}{0.19\textwidth}
    \centering
    \includegraphics[height=2.3cm, width=\linewidth]{./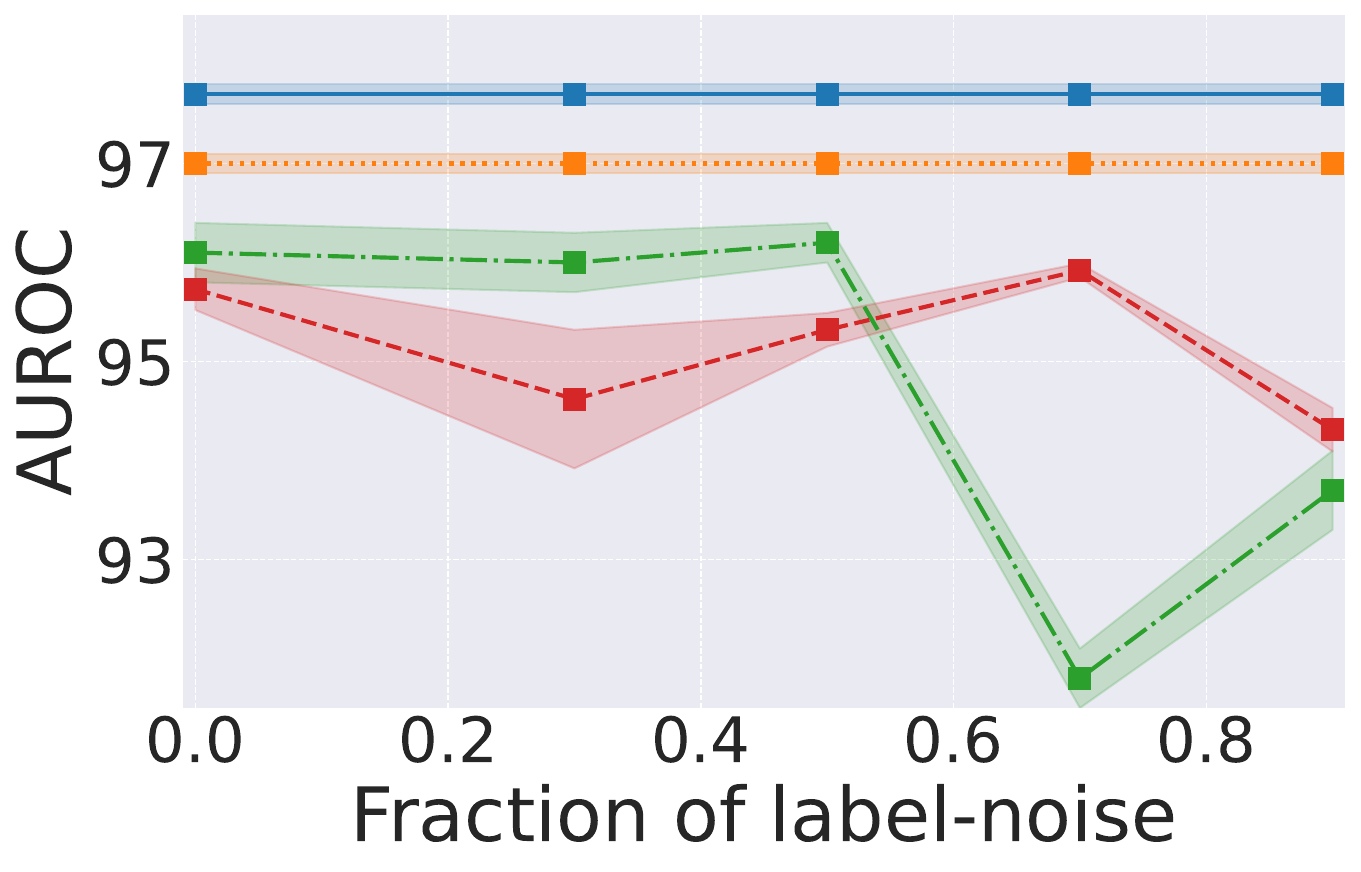}
    \caption{Ogbn-arxiv}
    \label{fig:fig3}
  \end{subfigure}
  \begin{subfigure}{0.19\textwidth}
    \centering
    \includegraphics[height=2.3cm, width=\linewidth]{./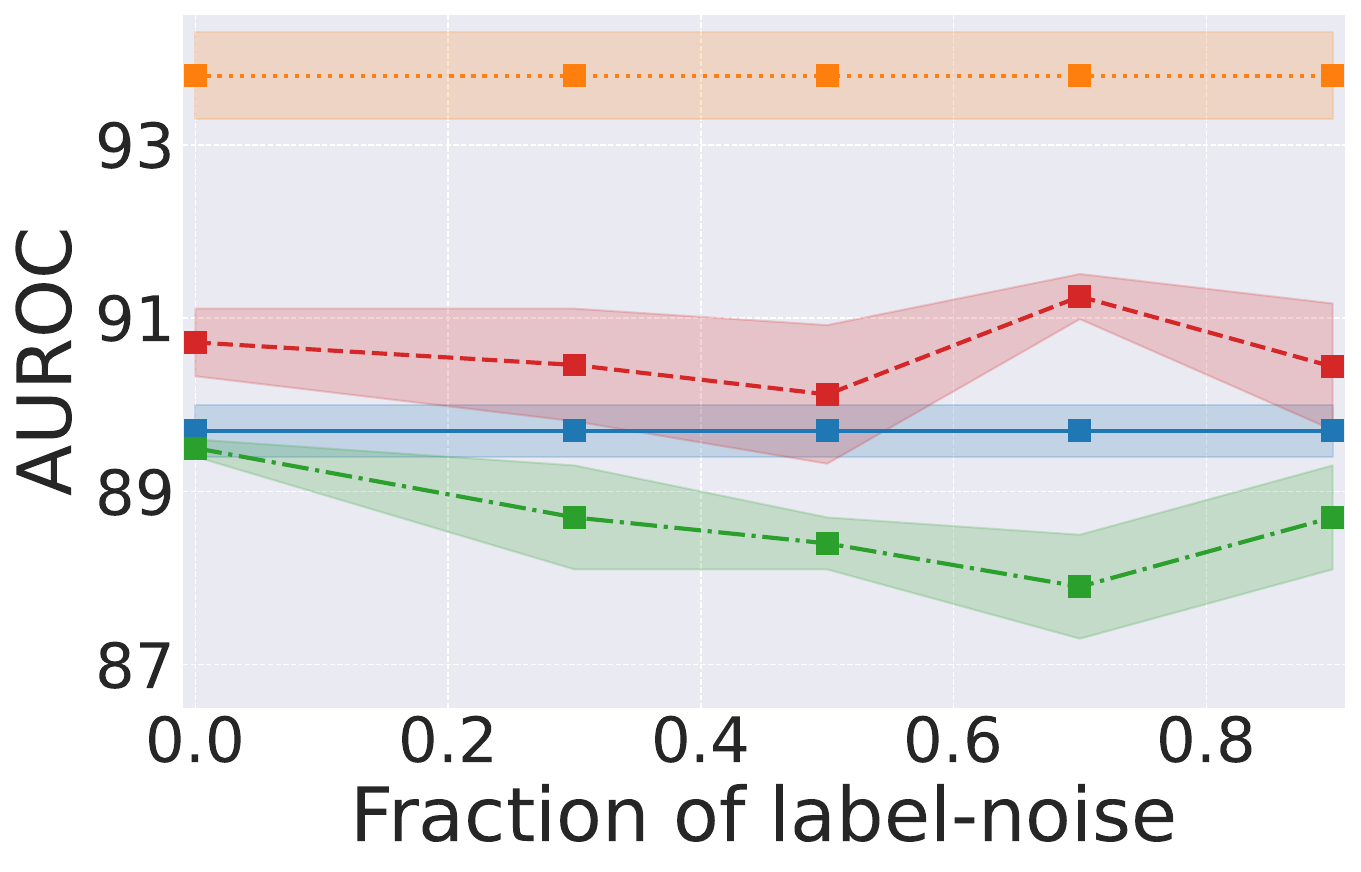}
    \caption{Flickr}
    \label{fig:fig4}
  \end{subfigure}
  \hspace{0pt} 
  \begin{subfigure}{0.19\textwidth}
    \centering
    \includegraphics[height=2.3cm, width=\linewidth]{./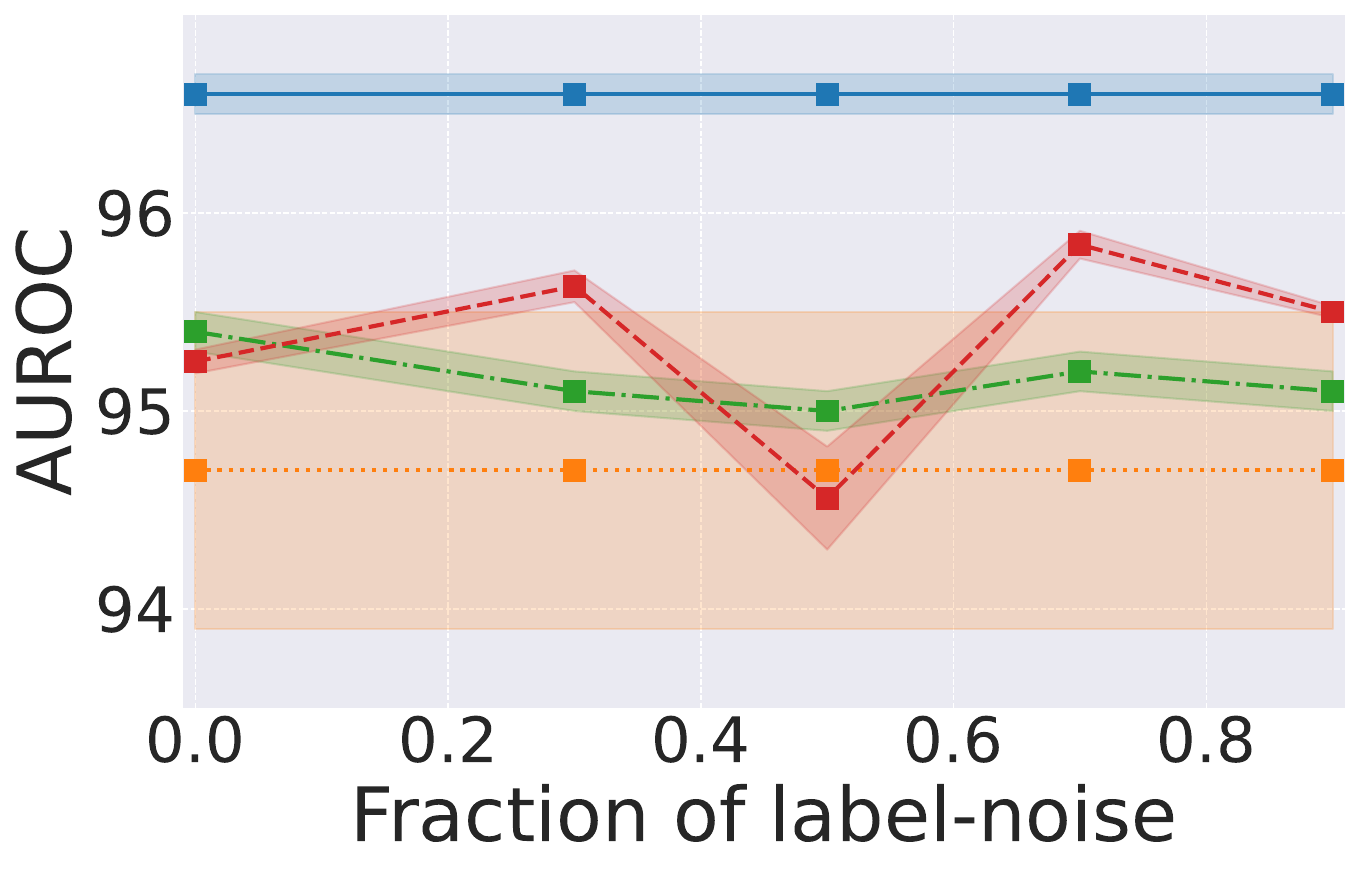}
    \caption{Reddit}
    \label{fig:fig5}
  \end{subfigure}  

  \begin{subfigure}{0.6\textwidth}
    \centering
    \includegraphics[width=\linewidth]{./images/swav_results_noRange/legend.pdf}
    \label{fig_link_legend}
  \end{subfigure}
  \vspace{-1em}
  \caption{\small Single-Source with Label Noise for Link Prediction: PLGC achieves stable performance, demonstrating robustness by utilizing structural and semantic information through self-supervised learning and outperforming supervised methods like GCond and SFGC at larger noise level.
  }
  \label{fig_link_source1}
\end{figure*}

\section{Conclusion} \label{sec_conclusion}
In this work, we introduced \textbf{PLGC}, a self-supervised graph condensation framework that eliminates the reliance on ground-truth labels by constructing pseudo-labels directly from latent node representations. Unlike prior condensation methods that depend on noisy or scarce supervision, PLGC leverages a principled prototype-based pseudo-labeling mechanism to capture latent cluster structure and guide the condensation process in a fully label-free manner.

We provided theoretical guarantees showing that, under mild distributional assumptions, the learned pseudo-labels preserve essential latent-space properties, including cluster separation and concentration of empirical centroids, with high probability. These results formally justify the stability and reliability of pseudo-label–driven condensation and explain why PLGC remains robust even when supervision is severely corrupted or unavailable.

Extensive experiments on five benchmark datasets demonstrate that PLGC achieves performance comparable to state-of-the-art supervised graph condensation methods in clean-label settings, while substantially outperforming them under increasing levels of label noise. Moreover, PLGC consistently excels in few-shot and multi-source scenarios, highlighting its ability to aggregate information across heterogeneous and noisy data sources—an increasingly common setting in real-world graph applications.

Overall, PLGC bridges the gap between self-supervised representation learning and dataset condensation by unifying theoretical rigor with empirical robustness. This work opens several promising directions for future research, including extending pseudo-label–driven condensation to dynamic or heterogeneous graphs, exploring tighter theoretical bounds under weaker assumptions, and applying PLGC to continual learning and privacy-preserving graph learning settings.

\bibliography{main}
\bibliographystyle{tmlr}

\newpage
\appendix

\section{Theoretical Analysis of Pseudo-Labeled Graph Condensation}
\label{sec:appendix-theory}

This appendix provides a complete theoretical analysis of why the pseudo-labels
$\widetilde{Y}=\{\widetilde{y}_1,\dots,\widetilde{y}_K\}$ produced by PLGC behave as assignment-weighted centroids and concentrate around the true latent centers.
Our main theorem shows that:  
(1) each pseudo-label $\widetilde{y}_k$ concentrates around the true latent
center $\mu_k$, and  
(2) the pseudo-labels preserve cluster separation, ensuring that the condensed
graph encodes the same latent geometric structure as the original graph.  
This result explains why PLGC produces stable and robust condensed graphs,
especially under label noise or weak supervision.
We also demonstrate the sample complexity to ensure stronger separation among the learned pseudo-labels.
In the following, we first present a set of supporting lemmas followed by the main theorem for our paper.

\begin{lemma}[\textsf{Existence of a $1/2$-net on unit sphere}]
\label{lem:half-net}
Let $\mathbb{S}^{d-1} := \{u\in\mathbb{R}^d : \|u\|_2=1\}$ denote the unit sphere.
There exists a finite set $\mathcal{N}\subset\mathbb{S}^{d-1}$ such that
for every $u\in\mathbb{S}^{d-1}$, there exists $v\in\mathcal{N}$ satisfying
$\|u-v\|\le \tfrac12$.
Moreover, the cardinality of $\mathcal{N}$ satisfies $|\mathcal{N}|\le 5^d$.
\end{lemma}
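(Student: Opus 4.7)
The plan is to construct $\mathcal{N}$ via a maximal packing argument and bound its size by a standard volume comparison in $\mathbb{R}^d$. First, I would let $\mathcal{N}\subset\mathbb{S}^{d-1}$ be a maximal $\tfrac12$-separated subset, i.e.\ a set such that $\|v-v'\|>\tfrac12$ for all distinct $v,v'\in\mathcal{N}$, and such that no point of $\mathbb{S}^{d-1}$ can be added while preserving this property. Such a set exists by Zorn's lemma (or, since we will show $\mathcal{N}$ is finite, by a straightforward greedy construction). Maximality immediately yields the $\tfrac12$-net property: if some $u\in\mathbb{S}^{d-1}$ had $\|u-v\|>\tfrac12$ for every $v\in\mathcal{N}$, then $\mathcal{N}\cup\{u\}$ would still be $\tfrac12$-separated, contradicting maximality.

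Next, to bound $|\mathcal{N}|$, I would use a volume-packing argument. Consider the open Euclidean balls $B(v,\tfrac14)\subset\mathbb{R}^d$ centered at the points $v\in\mathcal{N}$. By $\tfrac12$-separation of $\mathcal{N}$, the triangle inequality implies these balls are pairwise disjoint. Moreover, since $\|v\|=1$ for each $v\in\mathcal{N}$, every such ball is contained in the ball $B(0,\tfrac54)$ centered at the origin. Taking Lebesgue volumes and using homogeneity, I obtain
\begin{equation*}
|\mathcal{N}|\cdot\mathrm{vol}\bigl(B(0,\tfrac14)\bigr)
\;\le\;\mathrm{vol}\bigl(B(0,\tfrac54)\bigr),
\qquad\text{hence}\qquad
|\mathcal{N}|\;\le\;\Bigl(\tfrac{5/4}{1/4}\Bigr)^{d}=5^{d}.
\end{equation*}

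This also confirms finiteness \emph{a posteriori}, so the greedy construction terminates and Zorn is not strictly needed. The two ingredients together give both conclusions of the lemma.

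\paragraph{Main obstacle.}
There is no genuine obstacle here—this is a classical covering-number estimate—but the step requiring a little care is the disjointness claim: one must verify that $\tfrac12$-separation in Euclidean distance on the sphere translates correctly into disjointness of radius-$\tfrac14$ balls in the ambient $\mathbb{R}^d$, and that the enlarged containing ball has radius exactly $1+\tfrac14=\tfrac54$ (not $\tfrac54$ along the sphere, which would be a geodesic distance). Once the Euclidean/ambient viewpoint is fixed, both facts follow from the triangle inequality, and the bound $5^d$ drops out from the ratio of radii raised to the dimension.
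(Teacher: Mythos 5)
Your proof is correct and follows essentially the same route as the paper: a maximal $\tfrac12$-separated subset gives the net property by maximality, and the volume-packing comparison of disjoint radius-$\tfrac14$ balls inside $B(0,\tfrac54)$ yields $|\mathcal{N}|\le 5^d$. The only difference is cosmetic (your explicit remark about Zorn versus the greedy construction), so no changes are needed.
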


\begin{proof}
Construct \(\mathcal{N}\) as any maximal $1/2$--separated subset of \(\mathbb{S}^{d-1}\); that is,
choose points on the sphere greedily so that the distance between any two chosen points is \(>1/2\),
and stop when no further point can be added. By maximality the chosen set is automatically a \(1/2\)-net:
if there existed \(u\in\mathbb{S}^{d-1}\) with \(\|u-v\|>1/2\) for all chosen \(v\), then \(u\) could be added,
contradicting maximality.

It remains to bound the cardinality. For each $v\in\mathcal{N}$ consider the Euclidean ball
\(B(v,r)\subset\mathbb{R}^d\) of radius \(r=\tfrac14\). Because the points in \(\mathcal{N}\) are
$1/2$-separated, the balls \(B(v,\tfrac14)\) are pairwise disjoint. Moreover, every such ball
is contained in the Euclidean ball centered at the origin of radius \(1+\tfrac14=\tfrac54\),
since each \(v\in\mathbb{S}^{d-1}\) has $\|v\|=1$. Hence the union of these disjoint balls is
contained in \(B(0,5/4)\). Taking Lebesgue volumes (and denoting by \(\mathrm{vol}(B(0,r))\)
the volume of a radius-\(r\) ball in \(\mathbb{R}^d\)) we obtain
\[
|\mathcal{N}|\cdot \mathrm{vol}\!\big(B(0,\tfrac14)\big)
\le
\mathrm{vol}\!\big(B(0,\tfrac54)\big).
\]
Using the scaling of Euclidean volume, \(\mathrm{vol}(B(0,r)) = r^d \mathrm{vol}(B(0,1))\),
this yields
\[
|\mathcal{N}| \le \frac{(5/4)^d ~\mathrm{vol}(B(0,1))}{(1/4)^d ~\mathrm{vol}(B(0,1))}
= \Big(\frac{5/4}{1/4}\Big)^d = 5^d.
\]
\end{proof}

\begin{lemma}[\textsf{Directional concentration via Chernoff}]
\label{lem:directional-chernoff-swav}
Let $\{z_i\}_{i=1}^n$ be the set of independent samples corresponding to $k$-th pseudo-label with true latent centers, $\mu_k$.
We denote $q_{ik}\in\{0,1\}$ as the label assignments and define $s_k := \sum_{i=1}^n q_{ik}$.
Assume that for every unit vector $u\in\mathbb{S}^{d-1}$, the scalar random variable $u^\top(z_i-\mu_k)$ is $\sigma$-sub-Gaussian, \textit{i.e.,} 
\[
\mathbb{E}\big[\exp(\lambda\, u^\top (z_i-\mu_k))\big] \le \exp\!\left(\frac{\sigma^2\lambda^2}{2}\right)
\qquad\forall \lambda\in\mathbb{R}.
\]
Then for any $t>0$,
\[
\Pr\!\left(
\left|
u^\top
\left(
\frac{\sum_i q_{ik} z_i}{s_k}
-
\mu_k
\right)
\right|
\ge t
\right)
\le
2\exp\!\left(
-\frac{s_k t^2}{2\sigma^2}
\right).
\]
\end{lemma}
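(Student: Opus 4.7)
The plan is to treat this as a standard Chernoff-type argument for a normalized sum of zero-mean sub-Gaussian variables, specialized to the case where only a subset of the summands (those with $q_{ik}=1$) are active. The overall strategy is: rewrite the deviation as a sum, bound the moment generating function of the sum via the sub-Gaussian hypothesis, apply the Chernoff inequality, optimize over the dual variable, and finally union bound the two tails.

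First I would rewrite the left-hand side as
\[
u^\top\!\left(\tfrac{1}{s_k}\sum_i q_{ik} z_i - \mu_k\right)
\;=\; \tfrac{1}{s_k}\sum_{i=1}^n q_{ik}\, W_i,
\qquad W_i := u^\top(z_i-\mu_k),
\]
using the identity $\mu_k = \tfrac{1}{s_k}\sum_i q_{ik}\mu_k$. Next, working conditionally on the assignment vector $(q_{1k},\dots,q_{nk})$ (or equivalently treating the $q_{ik}$ as fixed selectors, which is standard in the analysis of stationary assignment-based centroid estimators), the active terms $\{W_i : q_{ik}=1\}$ are independent and each $\sigma$-sub-Gaussian by hypothesis, while inactive terms contribute zero. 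I would then multiply by $\lambda>0$ and take expectations to obtain
\[
\mathbb{E}\!\left[\exp\!\left(\tfrac{\lambda}{s_k}\sum_i q_{ik} W_i\right)\right]
\;=\; \prod_{i:\, q_{ik}=1}\mathbb{E}\!\left[\exp\!\left(\tfrac{\lambda}{s_k}W_i\right)\right]
\;\le\; \exp\!\left(\tfrac{s_k\,\sigma^2\lambda^2}{2 s_k^{2}}\right)
\;=\; \exp\!\left(\tfrac{\sigma^2\lambda^2}{2 s_k}\right),
\]
where the inequality uses the sub-Gaussian MGF bound term by term.

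From here the Chernoff step is routine: Markov's inequality gives
\[
\Pr\!\left(\tfrac{1}{s_k}\sum_i q_{ik} W_i \ge t\right)
\;\le\; \exp\!\left(-\lambda t + \tfrac{\sigma^2\lambda^2}{2 s_k}\right),
\]
and optimizing with $\lambda = s_k t/\sigma^2$ yields $\exp(-s_k t^2/(2\sigma^2))$. Applying the same bound to $-W_i$ (which is also $\sigma$-sub-Gaussian) and a union bound over the two tails produces the claimed factor of $2$.

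The only genuinely delicate point, and the one I would be careful to address, is the treatment of the $q_{ik}$: if they are data-dependent (e.g.\ produced by the Sinkhorn assignment on the same $z_i$), then the active terms $W_i$ are not unconditionally independent. The clean way to handle this is to note that the statement is used inside the stationarity analysis where the assignment pattern is fixed, so the bound holds conditionally on $(q_{ik})_i$, and hence also marginally (the right-hand side depends on the $q_{ik}$ only through $s_k$, which is what the lemma states). I would flag this conditioning explicitly at the start of the proof so that subsequent uses, in particular the union bound over a $1/2$-net from Lemma~\ref{lem:half-net} that will upgrade this directional bound to a norm bound, are on firm footing.
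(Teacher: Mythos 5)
Your proposal is correct and follows essentially the same route as the paper's proof: a Chernoff/MGF argument that exploits $q_{ik}\in\{0,1\}$ to reduce the product of sub-Gaussian factors to $\exp(\sigma^2\lambda^2 s_k/2)$ (up to your equivalent reparameterization by $\lambda/s_k$), followed by optimizing the dual variable and a two-sided union bound. Your explicit remark about conditioning on the assignment pattern $(q_{ik})_i$ is a reasonable clarification of a point the paper leaves implicit, but it does not change the argument.
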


\begin{proof}
We fix $u\in\mathbb{S}^{d-1}$ and define $X_i := u^\top(z_i-\mu_k)$, and $S := \sum_{i=1}^n q_{ik} X_i$.
Then $u^\top\left(\frac{\sum_i q_{ik} z_i}{s_k} - \mu_k \right) = \frac{S}{s_k}$.

Since each $X_i$ is $\sigma$-sub-Gaussian and $q_{ik}\in \{0,1\}$, we have for any
$\lambda>0$,
\[
\mathbb{E}\!\left[\exp\!\left(\lambda q_{ik} X_i\right)\right]
\le
\exp\!\left(\frac{\sigma^2\lambda^2 q_{ik}^2}{2}\right)
\le
\exp\!\left(\frac{\sigma^2\lambda^2 q_{ik}}{2}\right).
\]
Given $z_i$'s and therefore, $X_i$'s are independent, we have
\[
\mathbb{E}[e^{\lambda S}]
=
\prod_{i=1}^n
\mathbb{E}[e^{\lambda q_{ik} X_i}]
\le
\exp\!\left(
\frac{\sigma^2\lambda^2}{2}
\sum_{i=1}^n q_{ik}
\right)
=
\exp\!\left(
\frac{\sigma^2\lambda^2 s_k}{2}
\right).
\]

Next, we apply Markov:
$\Pr(S \ge s_k t) = \Pr(e^{\lambda S} \ge e^{\lambda s_k t}) \le  e^{-\lambda s_k t}~\mathbb{E}[e^{\lambda S}]
\le
\exp\!\left(
-\lambda s_k t + \frac{\sigma^2\lambda^2 s_k}{2}
\right)$.

Since, the above equation is true for any $\lambda \in \mathbb{R}$, we select $\lambda=t/\sigma^2$ to minimize the LHS, yielding
\[
\Pr(S \ge s_k t)
\le
\exp\!\left(-\frac{s_k t^2}{2\sigma^2}\right)
\Longleftrightarrow
\Pr\left(\frac{S}{s_k} \ge t\right)
= 
\Pr\left(u^\top\frac{\sum_i q_{ik} z_i}{s_k} - \mu_k \ge t\right)
\le
\exp\!\left(-\frac{s_k t^2}{2\sigma^2}\right)
.
\]
Finally, applying the same argument to $-S$ (or equivalently to $-u$) yields the two-sided bound as claimed in the lemma:
\[
\Pr\!\left(
\left|
u^\top
\left(
\frac{\sum_i q_{ik} z_i}{s_k}
-
\mu_k
\right)
\right|
\ge t
\right)
\le
2\exp\!\left(
-\frac{s_k t^2}{2\sigma^2}
\right).
\]
\end{proof}

Next, we analyze the stationarity of our pseudo-label learning objective.
Note that the actual loss objective (Eq.~\ref{eq_pseudo_label_learner}) enforces
cross-view consistency by matching pseudo-label assignments of one augmented view
with embeddings of another.
Although this cross-view formulation is crucial for effectively learning the
pseudo-labels, the optimization admits a simpler characterization with respect
to the pseudo-label variables.

Specifically, when considering fixed embeddings and assignment variables, the
objective function reduces to aggregating the similarity between the node
embeddings and their assigned pseudo-labels.
Therefore, without loss of generality, we analyze the stationary points of the
following simplified pseudo-label learning task and show that, under the
first-order optimality conditions, pseudo-labels align with the
assignment-weighted sum of embeddings.

\begin{lemma}[\textsf{Stationarity Condition of pseudo-labels}]
\label{lem:stationarity}
Consider the following simplified pseudo-label learning objective associated
with Eq.~\ref{eq_pseudo_label_learner}:
\[\min_{\{\widetilde y_k\}_{k=1}^K} \; -\sum_{i=1}^n \sum_{k=1}^K q_{ik}\, z_i^\top \widetilde y_k
\quad
\text{subject to } \|\widetilde y_k\|_2 = 1 \;\; \forall k .
\]
Any stationary point $\{\widetilde y_k\}_{k=1}^K$ of this problem satisfies, for each $k$,
$\widetilde y_k
\;\parallel\;
\sum_{i=1}^n q_{ik} z_i$ or
equivalently,
$\widetilde y_k = \frac{\sum_{i=1}^n q_{ik} z_i} {\big\|\sum_{i=1}^n q_{ik} z_i\big\|}$.
\end{lemma}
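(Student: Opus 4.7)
The plan is to treat the minimization as a decoupled problem, since the objective separates additively over $k$: writing $c_k := \sum_{i=1}^n q_{ik} z_i$, the objective becomes $-\sum_k c_k^\top \widetilde y_k$, so each pseudo-label $\widetilde y_k$ can be optimized independently over the unit sphere $\mathbb{S}^{d-1}$. This reduces the lemma to $K$ copies of the elementary constrained problem $\min_{\|y\|=1} -c^\top y$.

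Next, I would invoke the Lagrangian approach with a multiplier $\lambda_k$ for each equality constraint $\|\widetilde y_k\|^2 = 1$, obtaining
\begin{equation*}
\mathcal{L} = -\sum_{k=1}^K c_k^\top \widetilde y_k + \sum_{k=1}^K \lambda_k\bigl(\|\widetilde y_k\|^2 - 1\bigr).
\end{equation*}
Setting $\nabla_{\widetilde y_k} \mathcal{L} = 0$ yields $-c_k + 2\lambda_k \widetilde y_k = 0$, so $\widetilde y_k = c_k/(2\lambda_k)$, which immediately gives $\widetilde y_k \parallel c_k$. Imposing the unit-norm constraint fixes $2\lambda_k = \pm\|c_k\|$, and picking the sign that minimizes (rather than maximizes) the objective — i.e., the sign making $c_k^\top \widetilde y_k$ positive — yields $\widetilde y_k = c_k/\|c_k\|$ exactly as claimed.

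As an independent check, I would also give a Cauchy--Schwarz argument: for any unit vector $\widetilde y_k$, $c_k^\top \widetilde y_k \le \|c_k\|$ with equality iff $\widetilde y_k = c_k/\|c_k\|$. This confirms that the first-order stationary point coincides with the unique global minimizer of the subproblem whenever $c_k \ne 0$, and it also shows that the sign ambiguity from the Lagrangian step is resolved by selecting the minimizer rather than the maximizer.

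The only genuine subtlety — which I would address as a short remark rather than a calculation — is the degenerate case $c_k = \sum_i q_{ik} z_i = 0$. There, the gradient of the objective with respect to $\widetilde y_k$ vanishes, so every point on $\mathbb{S}^{d-1}$ is trivially stationary and the ``$\widetilde y_k \parallel c_k$'' statement holds vacuously, while the explicit formula $\widetilde y_k = c_k/\|c_k\|$ is undefined; the lemma should therefore be read as applying whenever $c_k \ne 0$, which is the only regime relevant for the downstream concentration bounds built on Lemmas~\ref{lem:half-net} and \ref{lem:directional-chernoff-swav}. No serious obstacle is expected since each subproblem is a one-line exercise in spherical optimization.
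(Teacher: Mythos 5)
Your proof is correct and follows essentially the same Lagrangian/KKT argument as the paper, obtaining $-c_k + 2\lambda_k \widetilde y_k = 0$ and normalizing. Your additional remarks (resolving the sign of $\lambda_k$ via Cauchy--Schwarz and flagging the degenerate case $c_k = 0$) are minor refinements the paper glosses over, not a different route.
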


\begin{proof}
By introduce Lagrange multipliers $\lambda_k \in \mathbb{R}$ for the unit-norm
constraints, we get:
\[
\mathcal{L}
=
-\sum_{i=1}^n \sum_{k=1}^K q_{ik}\, z_i^\top \widetilde y_k
+
\sum_{k=1}^K \lambda_k \big( \|\widetilde y_k\|_2^2 - 1 \big).
\]

At a stationary point, the first-order optimality (KKT) conditions require:
$\nabla_{\widetilde y_k} \mathcal{L} = 0 \quad \forall k$.

Computing the gradient yields: $-\sum_{i=1}^n q_{ik} z_i + 2\lambda_k \widetilde y_k = 0$.
Finally, we can rearrange to obtain $\widetilde y_k = \frac{1}{2\lambda_k} \sum_{i=1}^n q_{ik} z_i$.
Therefore, $\widetilde y_k$ is collinear with $\sum_{i=1}^n q_{ik} z_i$.
Imposing the constraint $\|\widetilde y_k\|_2 = 1$ yields the stated normalized
form.
\end{proof}

\begin{theorem}[\textsf{Pseudo-labels Concentration and Interior Point recovery}]
\label{thm:centroid}
Let $\{z_i\}_{i=1}^n$ be the independent samples corresponding to $K$ latent centers, with pseudo-label assignments denoted by $q_{ik} \in \{0,1\}$.
The samples $\{z_i : q_{ik}=1\}$ are corresponding to pseudo-label $k$.
We denote $\mu_k$ be the true latent centers (or the population centers) and  define $s_k := \sum_{i=1}^n q_{ik}$. 
We assume: 

\begin{enumerate}[label=\textbf{(A\arabic*)}]
  \item \textbf{Sub-Gaussian noise.} for every unit vector $u\in\mathbb{S}^{d-1}$, the scalar random variable $u^\top(z_i-\mu_k)$ is $\sigma$-sub-Gaussian, \textit{i.e.,} 
$\mathbb{E}\big[\exp(\lambda\, u^\top (z_i-\mu_k))\big] \le \exp\!\left(\frac{\sigma^2\lambda^2}{2}\right)
\quad\forall \lambda\in\mathbb{R}$.

  \item \textbf{Separation.} The true latent centers are well-separated i.e., 
  $\displaystyle \Delta := \min_{j\neq k}\|\mu_j-\mu_k\| > 0$.
  
\end{enumerate}

We set the deviation parameter $\epsilon_k \;:=\; 4\sigma\sqrt{\frac{d+\log(2K/\delta)}{s_k}}$.
Given that the pseudo-labels satisfies the stationarity condition as in Lemma \ref{lem:stationarity}, the following statements hold simultaneously for all \(k\) with probability at least \(1-\delta\):

\begin{enumerate}[label=(\roman*)]
  \item \textbf{Pseudo-label Concentration.} 
  Each pseudo-label satisfies $\|\widetilde{y}_k - \mu_k\| \le \epsilon_k$, i.e., pseudo-labels produced by PLGC remains tightly concentrated to the true latent centers.

  \item \textbf{Interior-point recovery.} Any point $z_i$ assigned to $k$-th pseudo-label (i.e., $q_{ik}=1$) such that $\|z_i - \mu_k\| < \frac{\Delta}{2} - \epsilon_k$ remains closer to $\widetilde{y}_k$ than other labels 
  $\widetilde{y}_\ell$ -- indicating that, the interior points are always correctly assigned.

  \item \textbf{Sample Complexity \& Pseudo-Label Separation.}
  A sufficient sample size to ensure a positive interior margin of $\epsilon_{\max}\le\Delta/\beta$ is: $s_k \;\ge\; \frac{16\,\sigma^2\,\beta^2}{\Delta^2}\,\big(d + \log(2K/\delta)\big)$, and leads to a stronger separation of pseudo-labels: $\|\widetilde y_k-\widetilde y_\ell\| \ge (1-\tfrac{2}{\beta})\Delta ~~ \forall k\neq\ell$
  
\end{enumerate}
\end{theorem}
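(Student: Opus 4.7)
The plan is to establish (i), (ii), and (iii) in order, since (ii) and (iii) both reduce to short triangle-inequality arguments once the concentration bound of (i) is available. The three preceding lemmas---the $1/2$-net (Lemma~\ref{lem:half-net}), the directional Chernoff inequality (Lemma~\ref{lem:directional-chernoff-swav}), and the stationarity characterization of $\widetilde{y}_k$ (Lemma~\ref{lem:stationarity})---are precisely the three technical ingredients needed for (i); everything after that is essentially geometric.

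For part (i), Lemma~\ref{lem:stationarity} writes any stationary pseudo-label as $\widetilde{y}_k = \bar z_k / \|\bar z_k\|$, where $\bar z_k := s_k^{-1}\sum_i q_{ik} z_i$ is the assignment-weighted empirical mean, so the natural route is first to control $\|\bar z_k - \mu_k\|$ in Euclidean norm and then to propagate the bound to the normalized pseudo-label. Lemma~\ref{lem:directional-chernoff-swav} supplies the fixed-direction tail bound $\Pr(|u^\top(\bar z_k-\mu_k)|\ge t)\le 2\exp(-s_k t^2/(2\sigma^2))$ for every $u\in\mathbb{S}^{d-1}$. To promote this to a norm bound, I would fix a $1/2$-net $\mathcal{N}$ of the sphere with $|\mathcal{N}|\le 5^d$ (Lemma~\ref{lem:half-net}), use the standard discretization identity $\|v\|\le 2\sup_{u\in\mathcal{N}} u^\top v$, and take a union bound over $\mathcal{N}$ and over the $K$ clusters. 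Choosing $t=\epsilon_k/2 = 2\sigma\sqrt{(d+\log(2K/\delta))/s_k}$, with the prefactor $4$ in the stated $\epsilon_k$ absorbing both the discretization constant and the $d\log 5$ term from the net cardinality, gives $\|\bar z_k-\mu_k\|\le \epsilon_k/2$ uniformly in $k$ with probability at least $1-\delta$. Since the embeddings are $\ell_2$-normalized (Section~\ref{sec_pseudo_label}), $\|\mu_k\|=1$, and a one-line estimate yields $\|\widetilde{y}_k-\mu_k\|\le 2\|\bar z_k-\mu_k\|\le \epsilon_k$, proving (i). Denote this high-probability event by $\mathcal{E}$.

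Parts (ii) and (iii) then follow by geometry on $\mathcal{E}$. For (ii), the triangle inequality gives $\|z_i-\widetilde{y}_k\|\le \|z_i-\mu_k\|+\epsilon_k < \Delta/2$, and combining the reverse triangle inequality with the separation assumption gives $\|z_i-\widetilde{y}_\ell\|\ge \Delta - \|z_i-\mu_k\|-\epsilon_\ell \ge \Delta/2 + \epsilon_k - \epsilon_\ell$; interpreting the $\epsilon_k$ in the hypothesis as $\epsilon_{\max}:=\max_\ell \epsilon_\ell$ (equivalently, symmetrizing the margin to $(\Delta-\epsilon_k-\epsilon_\ell)/2$) makes the latter strictly exceed the former, which is the interior-point recovery claim. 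For (iii), inverting $\epsilon_k\le\Delta/\beta$ against the definition of $\epsilon_k$ yields the stated sample-complexity bound $s_k\ge 16\sigma^2\beta^2(d+\log(2K/\delta))/\Delta^2$, and one more triangle inequality $\|\widetilde{y}_k-\widetilde{y}_\ell\|\ge \|\mu_k-\mu_\ell\|-\|\widetilde{y}_k-\mu_k\|-\|\widetilde{y}_\ell-\mu_\ell\|\ge \Delta-2\epsilon_{\max}\ge (1-2/\beta)\Delta$ closes the argument.

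\textbf{Expected main obstacle.} The geometric steps in (ii) and (iii) are routine once (i) is in hand; the real bookkeeping is concentrated in (i), specifically in threading constants through the $5^d$ net bound, the sub-Gaussian tail, the two union bounds, and the subsequent normalization $\bar z_k\mapsto\widetilde{y}_k$ so that the stated prefactor $4$ in $\epsilon_k$ actually suffices. A secondary, essentially cosmetic subtlety is the implicit use of $\|\mu_k\|=1$: it is justified by the $\ell_2$-normalization performed by the encoder, but otherwise one would need to replace $\epsilon_k$ by $\epsilon_k/\|\mu_k\|$ in the final normalization step.
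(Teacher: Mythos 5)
Your overall skeleton is the same as the paper's: the fixed-direction sub-Gaussian tail (Lemma~\ref{lem:directional-chernoff-swav}), a union bound over the $1/2$-net of size $5^d$ (Lemma~\ref{lem:half-net}), a lifting step from the net to a norm bound, a union bound over the $K$ clusters, and then the same triangle-inequality arguments for (ii) and (iii). Your reading of (ii) via $\epsilon_{\max}$ (or the symmetrized margin) is in fact a small repair of the paper, whose own proof bounds $\|\widetilde y_\ell-\mu_\ell\|$ by $\epsilon_k$ rather than $\epsilon_\ell$; that patch is correct and welcome, as is your derivation of (iii).

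The genuine gap is in the constant bookkeeping of part (i), exactly where you predicted the difficulty. With the factor-$2$ discretization $\|v\|\le 2\sup_{u\in\mathcal{N}}u^\top v$, a directional threshold $t$ only yields the norm bound $2t$; so your stated choice $t=\epsilon_k/2=2\sigma\sqrt{(d+\log(2K/\delta))/s_k}$ gives $\|\bar z_k-\mu_k\|\le \epsilon_k$, not $\epsilon_k/2$. If you instead insist on the norm bound $\epsilon_k/2$, the directional threshold must be $\epsilon_k/4=\sigma\sqrt{(d+\log(2K/\delta))/s_k}$, whose exponent $s_k t^2/(2\sigma^2)=(d+\log(2K/\delta))/2$ is too small to cancel the $5^d$ net cardinality (one needs roughly $d\log 5+\log(2K/\delta)$), so the union bound no longer delivers failure probability $\delta/(2K)$ per cluster. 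Consequently your normalization step $\|\widetilde y_k-\mu_k\|\le 2\|\bar z_k-\mu_k\|$ lands at $2\epsilon_k$, overshooting the stated bound. The paper sidesteps this by spending the entire budget $\epsilon_k$ on $\|\bar z_k-\mu_k\|$ and silently identifying $\widetilde y_k$ with the unnormalized assignment-weighted mean, so in that respect your treatment is actually more faithful to Lemma~\ref{lem:stationarity} than the paper's own proof; but as written your constants do not close. To repair it, either (a) follow the paper and prove (i) for the weighted mean itself, (b) enlarge the prefactor in $\epsilon_k$, or (c) keep the prefactor $4$ but replace the crude factor-$2$ normalization estimate by the sharp one, $\bigl\|\,a/\|a\|-\mu_k\bigr\|\le \|a-\mu_k\|/\sqrt{\|a\|}$ for $\|\mu_k\|=1$, under a mild small-deviation condition. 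Note also that $\ell_2$-normalizing the embeddings gives $\|z_i\|=1$ but not $\|\mu_k\|=1$ (a mean of unit vectors has norm strictly below one), so that hypothesis must be assumed explicitly rather than inferred from the encoder.
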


\begin{proof}
We prove each claim in order. The argument uses the standard net + Chernoff
method for sub-Gaussian vectors; the constants above are conservative but explicit.

\paragraph{(i) Concentration of sample centroids.}
Fix a cluster \(k\) and a unit vector \(u\in\mathbb{S}^{d-1}\). By the
sub-Gaussian assumption and directional concentration argument (Lemma \ref{lem:directional-chernoff-swav}),
we get,
\[
\Pr\!\left( \left|
u^\top \left(\frac{\sum_i q_{ik} z_i}{s_k}
-
\mu_k
\right)
\right|
\ge t
\right)
\le 2\exp\!\left(-\frac{s_k t^2}{2\sigma^2} \right)
\qquad \forall t > 0
\]

Denoting \(\mathcal{N}\) be a $1/2$-net of \(\mathbb{S}^{d-1}\), we get $|\mathcal{N}|\le 5^d$ (Lemma \ref{lem:half-net}). 
Applying the tail bound to each $v\in\mathcal{N}$
and using union bound argument, we yield
\[
\Pr\!\Bigg(\exists v\in\mathcal{N}:\ \Bigg|v^\top\Bigg(\frac{\sum_i q_{ik} z_i}{s_k}-\mu_k\Bigg)\Bigg| \ge t\Bigg)
\le \sum_{v \in \mathcal{N}} \Pr\Bigg(\Bigg|v^\top\Bigg(\frac{\sum_i q_{ik} z_i}{s_k}-\mu_k\Big)\Bigg| \ge t\Bigg)
= 2\cdot 5^d \exp\!\Big(-\frac{s_k t^2}{2\sigma^2}\Big).
\]

If the maximum error over the net is $<t$, i.e., $\max_{v\in\mathcal{N}} |v^\top(\bar z_k-\mu_k)| < t$, then for any $u\in\mathbb{S}^{d-1}$, we can
choose $v\in\mathcal{N}$ such that $\|u-v\|\le 1/2$. Now, we using triangle inequality and decomposition $u = v + (u-v)$, we get:
\[
\Bigg|u^\top\Bigg(\frac{\sum_i q_{ik} z_i}{s_k}-\mu_k\Bigg)\Bigg| \le \Bigg|v^\top\Bigg(\frac{\sum_i q_{ik} z_i}{s_k}-\mu_k\Bigg)\Bigg| + \left|(u-v)^\top\Bigg(\frac{\sum_i q_{ik} z_i}{s_k}-\mu_k\Bigg)\right|.
\]

Applying the Cauchy-Schwarz inequality, $\left|\Big(u-v\Big)^\top\Big(\frac{\sum_i q_{ik} z_i}{s_k}-\mu_k\Big)\right| \le \Big\|u-v\Big\| \Big\|\frac{\sum_i q_{ik} z_i}{s_k}-\mu_k\Big\|$, along with the net property $\|u-v\| \le 1/2$ and the assumption that $|v^\top(\bar z_k-\mu_k)| < t$, we obtain:
\[
\Bigg|u^\top\Bigg(\frac{\sum_i q_{ik} z_i}{s_k}-\mu_k\Bigg)\Bigg| < t + \tfrac12\Bigg\|\frac{\sum_i q_{ik} z_i}{s_k}-\mu_k\Bigg\|.
\]
Taking the supremum over all $u \in \mathbb{S}^{d-1}$ yields the norm $\sup_{u} \Big|u^\top\Big(\frac{\sum_i q_{ik} z_i}{s_k}-\mu_k\Big)\Big| = \Big\|\frac{\sum_i q_{ik} z_i}{s_k}-\mu_k\Big\|$, leading to the inequality:
\[
\Big\|\frac{\sum_i q_{ik} z_i}{s_k}-\mu_k\Big\| < t + \Big\|\frac{\sum_i q_{ik} z_i}{s_k}-\mu_k\Big\|.
\]
Rearranging the terms, we get: $\Big\|\frac{\sum_i q_{ik} z_i}{s_k}-\mu_k\Big\| < 2t$. Consequently, the event $\Big\{\Big\|\frac{\sum_i q_{ik} z_i}{s_k}-\mu_k\Big\| \ge 2t \Big\}$ is contained within the event that the net bound fails, leading to the final lifting inequality:
\[
\Pr\!\Bigg(\Big\|\frac{\sum_i q_{ik} z_i}{s_k}-\mu_k\Big\| \ge 2t\Bigg)
\le 2\cdot 5^d \exp\!\Big(-\frac{s_k t^2}{2\sigma^2}\Big).
\]

Choosing
$t = \sigma\sqrt{\frac{2(d\log 5 + \log(2K/\delta))}{s_k}}$ (\textit{i.e.}, \(2t=\epsilon_k\)), we get
\(\Pr(\|\bar z_k-\mu_k\|\ge\epsilon_k)\le \delta/(2K)\). 
A union bound over all pseudo-labels, 
$k=1,\dots,K$ yields claim-(i) with probability at least \(1-\delta/2\). (Absorb the remaining small failure probability to get overall \(1-\delta\).)

\paragraph{(ii) Interior-point recovery.}
Fix \(k\) and a point \(z_i\) with $q_{ik}=1$ and
\(\|z_i-\mu_k\|< \tfrac{\Delta}{2} - \epsilon_k\).
For any other \(\ell\neq k\), we use the triangle inequality:
\[
\|z_i - \widetilde y_k\|
\le \|z_i-\mu_k\| + \|\mu_k - \widetilde y_k\|
< (\tfrac{\Delta}{2}-\epsilon_k) + \epsilon_k = \tfrac{\Delta}{2}.
\]
On the other hand, applying the triangle inequality and (i),
\[
\|z_i - \widetilde y_\ell\|
\ge \|\mu_k - \mu_\ell\| - \|z_i-\mu_k\| - \|\widetilde y_\ell - \mu_\ell\|
> \Delta - (\tfrac{\Delta}{2}-\epsilon_k) - \epsilon_k
= \tfrac{\Delta}{2}.
\]

Therefore, we always have \(\|z_i-\widetilde y_k\| < \|z_i-\widetilde y_\ell\|\) for all \(\ell\neq k\).
Hence, \(z_i\) is correctly assigned to \(\widetilde y_k\).

\paragraph{(iii) Sample Complexity \& Pseudo-Label Separation.} 
The sufficient sample size to ensure  \(\epsilon_{\max}\le \Delta/\beta\) is obtained by rearranging the expression of $\epsilon_k \;:=\; 4\sigma\sqrt{\frac{d+\log(2K/\delta)}{s_k}}$ \textit{i.e.,} 
\[
s_k \;\ge\; \frac{16\,\sigma^2\,\beta^2}{\Delta^2}\,\big(d + \log(2K/\delta)\big)
\qquad\text{for all }k.
\]

Finally, \(\epsilon_{\max}\le \Delta/\beta\) naturally leads to a stronger separation bound of pseudo-labels by using triangle inequality as follows:
\[
\|\widetilde y_l-\widetilde y_k\|
\ge \|\mu_k - \mu_\ell\| -  \|\widetilde y_k - \mu_k\| - \|\widetilde y_\ell - \mu_\ell\|
\ge \Delta - \epsilon_k - \epsilon_\ell 
\ge \Delta - 2\epsilon_{\max} = \Big(1-\frac{2}{\beta}\Big)\Delta
\]

\end{proof}

\end{document}